\def\ICML{1}  
\def\tdotoggle{0}  
\def\showExperimental{0}  
\newcommand{\OnlyICML}[1]{\ifnum\ICML=1 #1 \fi}
\newcommand{\OnlyArXiv}[1]{\ifnum\ICML=0 #1 \fi}
\newcommand{\Experimental}[2]{\ifnum\showExperimental=1{\textcolor{WildStrawberry}{#1}}\else{#2}\fi}
\theoremstyle{plain}
\newtheorem{theorem}{Theorem}[section]
\newtheorem{lemma}[theorem]{Lemma}
\newtheorem{claim}[theorem]{Claim}
\newtheorem{conjecture}[theorem]{Conjecture}
\crefname{conjecture}{Conjecture}{Conjectures}
\crefname{observation}{Observation}{Observations}
\theoremstyle{definition}
\newtheorem{definition}[theorem]{Definition}
\newtheorem{assumption}[theorem]{Assumption}
\crefname{assumption}{Assumption}{Assumptions}
\newtheorem{property}[theorem]{Property}
\crefname{property}{Property}{Properties}
\theoremstyle{remark}
\definecolor{pw}{HTML}{7977B8}
\definecolor{og}{HTML}{3C8031}
\definecolor{maroon}{HTML}{AF3235}
\definecolor{yo}{HTML}{FAA21A}
\definecolor{mybrick}{RGB}{180,14,15}
\definecolor{Gred}{RGB}{219, 50, 54}
\definecolor{Ggreen}{RGB}{60, 186, 84}
\definecolor{Gblue}{RGB}{72, 133, 237}
\definecolor{Gyellow}{RGB}{247, 178, 16}
\definecolor{ToCgreen}{RGB}{0, 128, 0}
\definecolor{myGold}{RGB}{231,141,20}
\definecolor{myBlue}{rgb}{0.19,0.41,.65}
\definecolor{myPurple}{RGB}{175,0,124}
\providecommand{\tdotoggle}{2}
\newcommand{\mytodo}[1]{\ifnum\tdotoggle>0{#1}\fi}
\newcommand{\unsure}[1]{\mytodo{\todo[linecolor=pw,backgroundcolor=pw!25,bordercolor=pw]{#1}}}
\newcommand{\info}[1]{\ifnum\tdotoggle>1{\todo[linecolor=og,backgroundcolor=og!25,bordercolor=og]{#1}}\fi}
\newcommand{\nishanth}[1]{\mytodo{\todo[linecolor=purple,backgroundcolor=purple!25,bordercolor=purple]{ND: #1}}}
\newcommand{\tableoftodos}{\ifnum\tdotoggle=1 \listoftodos[Comments/To Do's] \fi}
\DeclareMathOperator*{\Ex}{\mathbb{E}}
\DeclareMathOperator*{\argmin}{\mathrm{argmin}}
\newcommand{\what}[1]{\widehat{#1}}
\newcommand{\wtilde}[1]{\widetilde{#1}}
\newcommand{\set}[1]{\left \{ #1 \right \}}
\newcommand{\inparen}[1]{\left ( #1 \right )}
\newcommand{\insquare}[1]{\left [ #1 \right ]}
\newcommand{\inangle}[1]{\left \langle #1 \right \rangle}
\newcommand{\infork}[1]{\left \{ \begin{matrix} #1 \end{matrix} \right .}
\newcommand{\bbR}{\mathbb{R}}
\newcommand{\bbS}{\mathbb{S}}
\newcommand{\calC}{\mathcal{C}}
\newcommand{\calD}{\mathcal{D}}
\newcommand{\calL}{\mathcal{L}}
\newcommand{\calX}{\mathcal{X}}
\newcommand{\imgPath}{plots}  
\icmltitlerunning{Do More Negative Samples Necessarily Hurt In Contrastive Learning?}
\newcommand{\NCE}{\mathrm{NCE}}
\newcommand{\DNCE}{\calD_{\NCE}}
\newcommand{\LNCEk}{\calL_{\NCE}^{(k)}}
\newcommand{\Lsup}{\calL_{\mathrm{sup}}}
\newcommand{\Dsup}{\calD_{\mathrm{sup}}}
\newcommand{\logloss}{\ell_{\mathrm{log}}}
\newcommand{\hingeloss}{\ell_{\mathrm{hinge}}}
\newcommand{\CS}{\mathsf{CS}}
\newcommand{\IntraVar}{\mathsf{Intra}\text{-}\mathsf{Var}}
\newcommand{\tuple}[1]{\inangle{#1}}
\begin{document}

\OnlyICML{
	\twocolumn[
	\icmltitle{Do More Negative Samples Necessarily Hurt In Contrastive Learning?}
	
	
	
	\icmlsetsymbol{equal}{*}
	\begin{icmlauthorlist}
	\icmlauthor{Pranjal Awasthi}{equal,google}
	\icmlauthor{Nishanth Dikkala}{equal,google}
	\icmlauthor{Pritish Kamath}{equal,google}
	\end{icmlauthorlist}
	\icmlaffiliation{google}{Google Research, USA}

	\icmlcorrespondingauthor{Nishanth Dikkala}{nishanthd@google.com}
	\icmlcorrespondingauthor{Pritish Kamath}{pritish@alum.mit.edu}
	
	\icmlkeywords{Machine Learning, ICML}
	
	\vskip 0.3in
	]

	
	
	\printAffiliationsAndNotice{\icmlEqualContribution} 
}\OnlyArXiv{
	\vspace*{-3mm}
	\begin{center}
		\hrule height 2.5pt
		\vspace{7mm}{\LARGE Do More Negative Samples Necessarily Hurt In Contrastive Learning?}\\[7mm]
		\hrule height 1.25pt
		\vspace*{4mm}
	
		\newcolumntype{x}{>{\centering\arraybackslash}p{5cm}}
		\begin{tabular}{xxx}
			Pranjal Awasthi$^*$ & Nishanth Dikkala$^*$ & Pritish Kamath$^*$ \\
			\texttt{pranjalawasthi@google.com} & \texttt{nishanthd@google.com} & \texttt{pritish@alum.mit.edu}\\[2.7mm]
			\multicolumn{3}{c}{\large $^*$Google Research}
		\end{tabular}
		\vspace{2mm}
	\end{center}
}
\begin{abstract}
Recent investigations in noise contrastive estimation suggest, both empirically as well as theoretically, that while having more ``negative samples'' in the contrastive loss improves downstream classification performance initially, beyond a threshold, it hurts downstream performance due to a ``collision-coverage'' trade-off. But is such a phenomenon inherent in contrastive learning?
We show in a simple theoretical setting, where positive pairs are generated by sampling from the underlying latent class (introduced by Saunshi et al. (ICML 2019)), that the downstream performance of the representation optimizing the (population) contrastive loss in fact does not degrade with the number of negative samples. Along the way, we give a structural characterization of the optimal representation in our framework, for noise contrastive estimation. We also provide empirical support for our theoretical results on CIFAR-10 and CIFAR-100 datasets.
\end{abstract}

\section{Introduction}
\label{sec:intro}
Unsupervised representation learning aims to extract semantically meaningful features from complex high-dimensional inputs without a supervised signal \citep{bengio2013representation}. These representations are then meant to be useful for a host of downstream supervised tasks. The benefits of successfully executing such a paradigm for learning are twofold: (1) labelled data is expensive and in contrast, unlabelled data is abundant and easy to get, (2) rather than building a specialized model for each downstream task we get to learn a general-purpose representation which makes solving each downstream task simpler which makes it possible to scalably solve multiple downstream tasks in an efficient manner.

Noise contrastive estimation (NCE) \citep{gutmann2010noise} (also known as Contrastive Learning) has emerged as a highly effective approach for unsupervised representation learning using deep networks \citep{chen2020simple, chen2020intriguing, tian2021understanding, grill2020bootstrap}. This approach tries to minimize the distance between representations of semantically similar inputs, while maximizing the distance between the representations of semantically dissimilar inputs. More concretely, a mathematical abstraction for representation learning with {\em Noise Contrastive Estimation (NCE)} for an input space $\calX$ is: (1) a single NCE example consists of $k + 2$ raw inputs, $(x, x^+, x_1^-, \ldots, x_k^-)$, where $(x, x^+)$ are ``semantically similar'' and $x_i^-$ are sampled from the same marginal distribution as $x$\info{PK: I changed ``some base distribution'' to ``the same marginal distribution as $x$''. Is that restrictive?}, (2) the representation $f : \calX\to \bbR^d$ is trained to encourage $f(x)^\top f(x^+) \gg f(x)^\top f(x_i^-)$ for each $i$. This second step can be done with standard classification objectives, such as the cross-entropy loss, where the model is viewed as a classifier over $k + 1$ labels. For example, a candidate objective to minimize (called NCE loss) is
\begin{align}
	\Ex\limits_{x, x^+, x_{1:k}^-} \log \inparen{ \frac{e^{f(x)^\top f(x^+)} + \sum_{i=1}^k e^{f(x)^\top f(x_i^-)}}{e^{f(x)^\top f(x^+)}}} \label{eq:contrastive-loss-logistic}
\end{align}
To have a sense of scale it is assumed that $\|f(x)\|_2 = 1$ for all $x$. Such a normalization is also standard in practice \citep{chen2020simple, wang2020understanding, zimmermann2021contrastive}. Following standard terminology, we will refer to $(x, x^+)$ as a positive pair and $x_i^-$ as negative examples.

Contrastive learning combined with deep neural networks has recently shown highly promising empirical results in the vision and NLP paradigms \citep{smith2005contrastive,mikolov2013efficient,schroff15facenet,chen2020simple,oord2018representation,wang2021dense,clark2020pre}. Despite this empirical success, it is not well understood why a good representation learnt in this manner works well for downstream tasks. In particular, there are many design choices present in the formulation which can affect the quality of the representations learnt. Some of the salient ones are the number of negative examples per sample $k$, the choice of the architecture, the distribution of positive pairs $(x, x^+)$, the hyper-parameters of the optimization algorithm among others. In this paper, we focus primarily on the number of negative examples $k$.

Prior empirical work observed that increasing $k$ helps improve the quality of the representations~\cite{chen2020simple}. However, in a theoretical framework proposed by
\cite{saunshi2019theoretical} to analyze the properties of NCE, it is argued that increasing $k$ beyond a certain point can degrade performance due to an increased chance of seeing negative samples which have the same latent features as $x, x^+$. These are unintended \emph{collisions} and \citet{saunshi2019theoretical} argue that too many collisions might make it harder for the model to learn good representations, evidenced by the degradation of an {\em upper bound} they show on the supervised learning loss in terms of the NCE loss.

The follow-up work of \cite{ash21investigating} also re-iterates this message by proposing that although the quality of the learnt representation initially improves with increasing $k$ due to improved \emph{coverage}, beyond a point it starts to degrade exponentially fast with $k$. Hence the work of \citet{ash21investigating} proposes that a \emph{collision-coverage trade-off} is inherent in contrastive learning, suggesting that the optimal value of $k$ should scale with the number of underlying concepts in the data.
Again, this is evidenced by the degradation of an {\em upper bound} they show on supervised learning loss in terms of the NCE loss.
However, this line of reasoning has an issue that the supervised loss is bounded, even for a fixed representation, whereas the contrastive loss can grow to $\infty$ as $k \to \infty$, even for the ``best'' representation (see \cref{sec:discussion} for more details).
\citet{ash21investigating} also provide an example setting of two representations such that the relative order of the NCE loss of these representations changes with $k$, which shows that NCE loss is not consistent about which representation is better when we vary $k$. However, this example does not consider representations that minimizes the NCE loss.
Moreover, the upper bounds in these works hold for {\em all} representations, whereas the representation of interest are only the ones found by minimizing a loss function such as the one defined in Eq. \eqref{eq:contrastive-loss-logistic}, which we refer to as an {\em NCE optimal representation}.

Inspired from the above work, we study the following fundamental question:
\OnlyICML{\vspace{-3mm}}
\begin{center}
	{\em Do more negative samples necessarily hurt the downstream performance of the \textbf{NCE optimal representations}?}
\end{center}

\paragraph{Our Contributions.}
To answer the above question we study the framework of contrastive learning with latent classes as introduced by \citet{saunshi2019theoretical} (also studied by \cite{ash21investigating}). Under this model we show the following results.
\begin{itemize}[leftmargin=*,itemsep=0pt]
	\item In \cref{sec:structural}, we obtain structural results characterizing the NCE optimal representation under certain assumptions on the data distribution. In particular, in \cref{thm:latent-ind-rep}, we show that when the latent classes are non-overlapping, the optimal NCE representation maps all points in the same class to the same vector and points in different classes map to different vectors. Moreover, we give a precise characterization of the NCE optimal representation in a specific setting where the distribution over latent classes is uniform (\cref{thm:simplex-etf}), and show that its performance on the downstream classification task in fact {\em does not degrade with increasing $k$}. We conjecture that this holds even in the case of a non-uniform distribution over latent classes.
	\item In \cref{sec:population-experiments}, we show empirical evidence towards our conjecture using numerical simulations of NCE optimal representations and their corresponding supervised learning loss. In order to do so, we use our structural results to formulate the task of finding the NCE optimal representation as a tractable convex optimization problem.
	\item In \cref{sec:cifar-experiments}, we corroborate our structural characterization results with experiments on the CIFAR10 and the CIFAR 100 datasets \citep{krizhevsky2009learning} which show that to a large extent the structural properties in \cref{thm:simplex-etf} we showed for the minimizer of the population NCE loss hold true on real data.
	\OnlyICML{\vspace{-2mm}}
\end{itemize}
While our assumptions are admittedly restrictive and does not correspond to practical use-cases of contrastive learning, our main goal is to shed light on the ``collision-coverage'' trade-off in this simplified example. Our observations suggest that the ``collision-coverage'' trade-off is not {\em inherent} in contrastive learning and perhaps the phenomena of more negative samples hurting downstream performance has more to do with other aspects of a contrastive learning algorithm, such as the implicit bias of optimizing with gradient based methods, generalizing from finite samples, choice of network architecture, etc.

\section{Contrastive Learning with Latent Classes}\label{sec:prelims}

We consider the following theoretical framework of {\em latent classes}, as introduced by \citet{saunshi2019theoretical} and also studied by \citet{ash21investigating}. Let $\calC$ be a set of latent classes with $|\calC| =: C$. With each latent class $c \in \calC$ we will associate a distribution $\calD_c$ over the {\em input space} $\calX$, which we view as the distribution over data conditioned on belonging to latent class $c$. We will also assume a distribution $\rho$ on $\calC$. We let $\calD$ be the mixture distribution obtained by sampling an input $x \sim \calD_c$ for a class $c \sim \rho$.

We assume access to similar data points in the form of pairs $(x, x^+)$ and $k$ negative data points $x_1^-, \ldots, x_k^-$. To formalize this, an unlabeled sample from $\calD_{\mathrm{NCE}}$ is generated as follows:\OnlyICML{\vspace{-3mm}}
\begin{itemize}[leftmargin=*,itemsep=0pt]
	\item Sample class $c \sim \rho$ and draw i.i.d. samples $x, x^+ \sim \calD_c$.
	\item Draw $x_i^-$ according to $\calD$ for $i \in \set{1, \ldots, k}$.
	\item Return $(x, x^+, x_1^-, \ldots, x_k^-)$.
\end{itemize}

\paragraph{NCE objective.}
The goal of contrastive learning is to learn a good representation using unsupervised data; we consider the set of representations $f : \calX \to \bbS^{d-1}$ that map the input to unit vectors in $d$ dimensions. This is done using the following objective, which intuitively encourages representations of similar inputs to be close to each other, and distinguishes it from representations of random inputs.
\begin{definition}\label{def:nce-loss}
	The NCE loss for a representation $f$ on the distribution $\DNCE$ is defined as\footnote{We use the notation $\tuple{a_i}_{i=1}^k$ to denote the tuple $(a_1, \ldots, a_k)$.}
	\[\LNCEk(f) ~:=~ \Ex_{\DNCE} \insquare{\ell\inparen{\tuple{f(x)^\top (f(x^+) - f(x_i^-))}_{i=1}^k}}\,.\]
	The empirical NCE loss with a finite set $S$ of samples $(x, x^+, x_{1:k}^-)$ drawn from $\DNCE$ is
	\[\textstyle\what{\calL}_{\mathrm{NCE}}^{(k)}(f) ~:=~ \frac{1}{|S|} \sum\limits_{S} \ell\inparen{\tuple{f(x)^\top (f(x^+) - f(x_i^-))}_{i=1}^k} \]
\end{definition}
We restrict $\ell$ to be one of two standard loss functions, hinge loss $\hingeloss^\beta(v) = \max\set{0, \max_i \set{1 -\beta v_i}}$ and logistic loss $\logloss^\beta(v) = \log(1 + \sum_i \exp(-\beta v_i))$, where $\beta$ is a scale (or ``inverse-temperature'') parameter; we often drop the superscript of $\beta$; all our theorems hold for all values of $\beta$. Note that both these losses are convex in $v$ and non-increasing in each $v_i$. Both of these losses have been used in practical NCE implementations \citep{schroff15facenet,chen2020simple}.

Our goal in this paper is to understand the role of negative samples in the NCE loss, in the performance of the representation in downstream supervised learning tasks. While algorithms in practice aim to minimize the empirical NCE loss, we focus on understanding the role of negative samples theoretically at the {\em population} level, namely, we consider both the population NCE loss as well as the population supervised learning loss. This allows us to bypass the issue of generalizing from finite samples and fundamentally understand the role of negative samples.

\paragraph{Downstream supervised learning task.}
We consider the performance of a representation as measured on the downstream supervised learning task of classifying a data point into one of the classes in $\calC$ using a linear predictor over the representation. In particular, let $\Dsup$ be the distribution over $(x,c)$ obtained by sampling $c \sim \rho$ and $x \sim \calD_c$.
\begin{definition}\label{def:sup-loss}
For any representation $f : \calX \to \bbS^{d-1}$, the supervised learning loss is given as\info{PK: Note that the loss $\ell$ in $\LNCEk$ takes in $k$ arguments, whereas here $\ell$ takes $|\calC|-1$ arguments. Perhaps that is why \citet{saunshi2019theoretical} studied the case where we only restrict to downstream tasks with $k+1$ classes.}
\[
\Lsup(f) ~:=~ \inf_{\substack{\set{w_c \mid c \in \calC} \\ \|w_c\|_2 \le 1}} \Lsup(f, \tuple{w_c}_{c \in \calC})
\]
where $\Lsup(f, \tuple{w_c}_{c \in \calC})$ is given as
\[
\Lsup(f, \tuple{w_c}_{c \in \calC}) := \Ex\limits_{(x, c) \sim \Dsup} \ell\inparen{\tuple{f(x)^\top (w_{c} - w_{c'})}_{c' \ne c}}
\]
\end{definition}

This downstream task is exactly the same as the one considered by \citet{ash21investigating}. On the other hand, \citet{saunshi2019theoretical} consider a a slightly different downstream task of classifying into $k+1$ classes (sampled from $\rho$). We go with above formulation as it disentangles the number of negatives in the NCE loss from the downstream task, and moreover allows for the number of negatives to be arbitrarily large (even more than the number of latent classes). However, this means that $\Lsup$ in a sense has a ``different scale'' than $\LNCEk$, and any direct comparison of the two kinds of losses has to adjust for the scale.\unsure{PK: Please see if this last line is okay.} \nishanth{Looks good.}

Note that we consider the restriction of $\|w_c\|_2 \le 1$, to have some sense of scale.\footnote{Without such a restriction, $\Lsup(f)$ can be $0$ for trivial reasons. For example, if the class marginals $\calD_c$ have disjoint supports, then for any $f$ that maps points of different classes to different vectors has $\Lsup(f) = 0$ for both the logistic loss and the hinge loss of $0$ by scaling up $\|w_c\|$ arbitrarily.} The constant $1$ is arbitrary as it is interchangeable with the scale parameter $\beta$ in the loss function.



\subsection{Related Work}
\label{sec:related}
Unsupervised representation learning has a long and rich history including the study of classical methods such as clustering \citep{coates2012learning}, dictionary learning  and non-negative matrix factorization \citep{mairal2009online, pennington2014glove, lee1999learning}, and modern deep learning based techniques such as contrastive learning \citep{chen2020simple} and masked language modeling \citep{devlin2018bert}. Here we discuss the works most relevant to our setting.

While contrastive learning has shown impressive empirical performance in recent years \citep{chen2020simple}, its effectiveness is poorly understood from a theoretical perspective. \citet{wang2020understanding} present a theoretical study of contrastive learning under certain assumptions on the data distribution
showing that asymptotically (as $k \to \infty$) the NCE optimal representation balances a trade-off between being uniformly distributed on the hypersphere and a property called {\em alignment}: the learnt representations of a positive pair $(x, x^+)$ are close to each other.
\citet{zimmermann2021contrastive} show that under a natural data generation model involving latent variables, optimizing the NCE loss corresponds to a form of non-linear independent component analysis (ICA) and the learnt representations can disentangle the latent space.
\citet{von2021self} study the data augmentation process in contrastive learning, i.e., the process of generating positive pairs and negative examples; assuming that the feature space consists of a {\em content} part that is invariant to augmentations and a {\em style} part. They show that optimizing the NCE loss can learn to separate these parts of the feature representations.

The closest to our work are the theoretical results of \citet{saunshi2019theoretical} and the recent work of \citet{ash21investigating}. \citet{saunshi2019theoretical} proposed a natural model for contrastive learning and provided upper bounds on the supervised loss of a representation $f$ in terms of the bound on the NCE loss of $f$. \citet{ash21investigating} further improved this upper bound, with sharper theoretical analysis. Based on the provided upper bounds these results indicate that the performance of a representation learned via contrastive learning can degrade with $k$, the number of negative samples, beyond a certain point. Working in the same model, our results show that the full picture is more subtle and if one could exactly optimize the NCE loss then the degradation with $k$ may not occur at all. In particular, our main result shows that under certain uniformity assumptions, the NCE optimal representation corresponds to the simplex ETF structure (with perfect downstream classification accuracy), that has also been observed in representations learned via standard supervised learning \cite{papyan2020prevalence}. 
After the publication of our work, we were made aware of the works of \citet{bao2021sharp,nozawa2021understanding} which improve the bounds from \cite{ash21investigating} and support our message that increasing the number of negative samples need not hurt downstream classification performance. Our techniques differ from these works though.
In \cref{sec:discussion} we provide a more detailed discussion of our results in the context of the results of \citet{saunshi2019theoretical} and \citet{ash21investigating}. We also discuss the relation of our results with those of \cite{nozawa2021understanding,bao2021sharp} in \cref{sec:discussion}.

\citet{haochen2021provable} relax certain conditional independence assumptions made in prior works \citep{saunshi2019theoretical} and design a new ``spectral contrastive loss'' function that has similarities to the traditional NCE loss, and show that optimizing the new loss using techniques from spectral graph theory can lead to near optimal downstream accuracy under certain assumptions.
\citet{saunshi22understanding} argue that in practically relevant settings, the distribution of the positive example $x^+$ corresponding to different $x$'s have little to no overlap, and explaning the success of the representations learnt on downstream supervised tasks cannot be done without accounting for specific inductive biases in the contrastive learning procedure; note that the setting we consider does not fall in this regime since any two inputs in the same latent class have the same distribution of positive examples.
Finally, there has also been recent work exploring whether contrastive learning can be performed without the use of negative samples while avoiding the phenomenon of feature collapse \citep{tian2021understanding, grill2020bootstrap}.


\section{Structural Results}\label{sec:structural}

We prove structural results about the representation $f : \calX \to \bbS^{d-1}$ that minimizes the (population) $\LNCEk(\cdot)$ loss. We do so under some simplifying assumptions about the set of class distributions $\set{\calD_c}_{c \in \calC}$ and the distribution $\rho$ over classes $\calC$.

\subsection{Non-overlapping Latent Classes}\label{subsec:non-overlap}

Our first structural result considers the case when the distributions $\calD_c$ have mutually disjoint supports.

\begin{assumption}[Non-Overlapping Latent Classes]\label{ass:non-overlap}
	The distributions $\set{\calD_c : c \in \calC}$ have mutually disjoint supports. In this case, we let $c(x)$ denote the unique latent class $c$ such that $x$ lies in the support of $\calD_c$.
\end{assumption}

We show that under this assumption, there exists an optimal representation that maps all points in the support of $\calD_c$ to the same representation vector, whenever the loss satisfies certain simple properties (both logistic and hinge losses satisfy these conditions).

\begin{property}\label{def:sub-additive-loss}
	For a loss function $\ell : \bbR^{t} \to \bbR_{\ge 0}$ it holds for all subsets $S \subseteq \set{1, \ldots, t}$ and $v \in \bbR^t$ that\OnlyICML{\vspace{-2mm}}
	\[
	\textstyle\ell(v) \ge \frac{1}{|S|} \cdot \sum\limits_{j \in S} \ell(v^{S \gets j}) \ \ \text{where, }\ v^{S \gets j}_i := \infork{v_i & \text{if } i \notin S \\ v_j & \text{if } i \in S}.
	\]
\end{property}

In words, a loss $\ell$ satisfies \cref{def:sub-additive-loss} if for all inputs $v$ and all subsets $S$ of the coordinates, substituting all coordinates in $S$ by $v_j$ for some uniformly random $j \in S$ on average does not increase the loss.

\begin{restatable}{observation}{obsSubadditive}\label{obs:logistic-hinge-subadditive}
	$\forall \beta > 0$ : $\logloss^\beta$ and $\hingeloss^\beta$ satisfy \cref{def:sub-additive-loss}.
\end{restatable}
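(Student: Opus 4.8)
The plan is to verify \cref{def:sub-additive-loss} separately for $\hingeloss^\beta$ and $\logloss^\beta$; in both cases this reduces to a short monotonicity/convexity argument, so I do not anticipate a serious obstacle.

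For the hinge loss, I would first rewrite $\hingeloss^\beta(v) = \max\set{0,\ 1 - \beta \min_i v_i}$, so that it depends on $v$ only through $\min_i v_i$ and is non-increasing in that quantity. For each $j \in S$ we have $\min_i v^{S\gets j}_i = \min\set{\min_{i\notin S} v_i,\ v_j} \ge \min_i v_i$, since $v_j \ge \min_{i \in S} v_i \ge \min_i v_i$. Hence $\hingeloss^\beta(v^{S\gets j}) \le \hingeloss^\beta(v)$ for \emph{every} $j \in S$ individually, and averaging over $j \in S$ gives the claim (in fact with room to spare --- no averaging is needed for the hinge loss).

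For the logistic loss, set $m := |S|$, $A := 1 + \sum_{i \notin S} e^{-\beta v_i}$, and $b_j := e^{-\beta v_j}$ for $j \in S$. Substituting all $m$ coordinates indexed by $S$ with $v_j$ replaces $\sum_{i\in S} e^{-\beta v_i}$ by $m\, e^{-\beta v_j}$, so $\logloss^\beta(v) = \log\inparen{A + \sum_{j \in S} b_j}$ while $\logloss^\beta(v^{S\gets j}) = \log(A + m b_j)$. Applying Jensen's inequality to the concave map $t \mapsto \log t$ yields
\[
\frac{1}{m}\sum_{j \in S} \log(A + m b_j) \ \le\ \log\inparen{\frac{1}{m}\sum_{j \in S}(A + m b_j)} \ =\ \log\inparen{A + \sum_{j \in S} b_j},
\]
and the right-hand side is exactly $\logloss^\beta(v)$, which is \cref{def:sub-additive-loss} for $\logloss^\beta$.

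The only point requiring a little care is bookkeeping the factor $m$: substituting $m$ coordinates by a common value $v_j$ contributes $m$ copies of $e^{-\beta v_j}$ inside the logarithm, which is precisely what makes the arithmetic mean $\frac{1}{m}\sum_{j\in S}(A + m b_j)$ collapse back to $A + \sum_{j\in S} b_j$. Both losses are finite for all $v \in \bbR^t$, so all maxima and logarithms above are well-defined, and the argument goes through for every $\beta > 0$.
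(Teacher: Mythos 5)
Your proof is correct and takes essentially the same approach as the paper: Jensen's inequality applied to the concave $\log$ for $\logloss^\beta$, and elementary monotonicity for $\hingeloss^\beta$. Your observation that the hinge-loss inequality holds pointwise for each $j \in S$ (since $\hingeloss^\beta$ depends only on $\min_i v_i$, which can only increase under $v \mapsto v^{S\gets j}$) is a slightly cleaner packaging than the paper's use of $\max \ge \text{average}$, but it is the same underlying fact.
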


\cref{obs:logistic-hinge-subadditive} is proved in \cref{apx:latent-ind-rep-proofs}.\footnote{Not all convex functions satisfy \cref{def:sub-additive-loss}; e.g. $\ell(v_1 + v_2) = (v_1 + v_2)^2$ is convex, but violates \cref{def:sub-additive-loss} for $S = \set{1, 2}$ and $v = (0, 1)$.} Our structural result is stated using the notion of {\em latent-indistinguishable} representations.
\begin{definition}
	Under \cref{ass:non-overlap}, a representation $f : \calX \to \bbS^{d-1}$ is said to {\em latent-indistinguishable} if $f(x) = f(x')$ for all $x, x'$ satisfying $c(x) = c(x')$. Similarly, $f$ is said to be {\em almost latent-indistinguishable} if $\Pr_{c \sim \rho\,,x, x' \sim \calD_c}[f(x) = f(x')] = 1$.
\end{definition}

Our first structural result shows that under the assumption of non-overlapping classes, there exists a latent-indistinguishable representation that minimizes the population NCE loss.

\begin{restatable}{theorem}{thmLatentIndep}\label{thm:latent-ind-rep}
	Under \Cref{ass:non-overlap}, for any convex, non-increasing loss $\ell$ satisfying \cref{def:sub-additive-loss}, it holds for all representations $f : \calX \to \bbS^{d-1}$, that there exists a latent-indistinguishable representation $\wtilde{f} : \calX \to \bbS^{d-1}$ such that $\LNCEk(\wtilde{f}) \le \LNCEk(f)$. Moreover, if $\ell$ is strictly convex (e.g. logistic loss), then the inequality above is strict, unless $f$ is almost latent-indistinguishable.\info{PK: I couldn't prove tightness of hinge loss, could only do it for strictly convex losses.}
\end{restatable}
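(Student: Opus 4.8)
The plan is to exhibit, for an arbitrary $f$, a \emph{random} latent‑indistinguishable representation that is at least as good in expectation, so that some realization of it beats $f$. Write $g(c):=\Ex_{x\sim\calD_c}[f(x)]$, so $\|g(c)\|_2\le 1$, and for a tuple $\vec z=(z_c)_{c\in\calC}$ with $z_c\in\Supp(\calD_c)$ let $\wtilde f_{\vec z}$ be the latent‑indistinguishable map with $\wtilde f_{\vec z}\big|_{\Supp(\calD_c)}\equiv f(z_c)$. Under \cref{ass:non-overlap} a positive pair $(x,x^+)$ has $c(x)=c(x^+)$, so $\wtilde f_{\vec z}(x)=\wtilde f_{\vec z}(x^+)=f(z_{c})$, and one computes
\[
\LNCEk(\wtilde f_{\vec z})~=~\Ex_{c\sim\rho,\;c_{1:k}\sim\rho}\Big[\ell\Big(\tuple{1-f(z_c)^\top f(z_{c_i})}_{i=1}^k\Big)\Big],
\]
with the convention $z_{c_i}=z_c$ when $c_i=c$ (such coordinates equal $0$). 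I would prove $\LNCEk(f)\ge\Ex_{\vec z}[\LNCEk(\wtilde f_{\vec z})]$, where the $z_c\sim\calD_c$ are drawn independently; since $\LNCEk(\wtilde f_{\vec z})\ge 0$, some realization $\vec z^\star$ then has $\LNCEk(\wtilde f_{\vec z^\star})\le\LNCEk(f)$, and $\wtilde f:=\wtilde f_{\vec z^\star}$ is the claimed representation.

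To establish the inequality I would fix the anchor class $c$, the anchor point $x\sim\calD_c$, and the negative classes $c_1,\dots,c_k\sim\rho$, and bound the conditional expectation over the remaining randomness in three moves. (i) \emph{Collapse the positive and the colliding negatives by convexity:} holding $x$ and the non‑colliding negatives fixed, each coordinate $f(x)^\top(f(x^+)-f(x_i^-))$ is affine in $\big(f(x^+),\{f(x_i^-):c_i=c\}\big)$, and these variables are independent with mean $g(c)$, so by Jensen the conditional expected loss is at least $\ell$ at the means, where the colliding coordinates become $f(x)^\top(g(c)-g(c))=0$ and the non‑colliding ones become $f(x)^\top(g(c)-f(x_i^-))$. (ii) \emph{Enlarge the positive inner product:} since $f(x)^\top g(c)\le\|g(c)\|_2\le 1=\|f(x)\|_2^2$ and $\ell$ is non‑increasing in each argument, replacing $f(x)^\top g(c)$ by $1$ only decreases $\ell$; the non‑colliding coordinates become $1-f(x)^\top f(x_i^-)$ while the colliding ones remain $0=1-\|f(x)\|_2^2$. (iii) \emph{Merge same‑class negatives via \cref{def:sub-additive-loss}:} for each class $c'\ne c$ the non‑colliding negatives in class $c'$ are i.i.d.\ from $\calD_{c'}$, and applying \cref{def:sub-additive-loss} with $S$ that index set, together with exchangeability of the i.i.d.\ draws, replaces all of them by a single fresh $z_{c'}\sim\calD_{c'}$ without increasing the expected loss. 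Setting $z_c:=x$, the resulting bound is exactly the conditional form of $\Ex_{\vec z}[\LNCEk(\wtilde f_{\vec z})]$; integrating over $c,c_{1:k}$ completes it.

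For the strict version, note that a strictly convex, non‑increasing $\ell:\bbR^t\to\bbR_{\ge 0}$ is strictly \emph{decreasing} in each coordinate, and that the Jensen step (i) is strict unless $f(x)^\top f(x')$ is almost surely constant over $x'\sim\calD_c$, i.e.\ unless $f(x)\perp\mathrm{Range}(\mathrm{Cov}_{\calD_c}(f))$. If $f$ is not almost latent‑indistinguishable then on a positive‑$\rho$‑measure set of classes $c$ we have $\mathrm{Cov}_{\calD_c}(f)\neq 0$ (a unit‑norm map with zero covariance is a.s.\ constant), and a short linear‑algebra argument (range contained in kernel of a symmetric matrix forces both to be trivial or the whole space) shows that then $f(x)$ fails the orthogonality with positive $\calD_c$‑probability; on that positive‑probability event step (i) is strict, so $\LNCEk(f)>\Ex_{\vec z}[\LNCEk(\wtilde f_{\vec z})]$ and a suitable $\vec z^\star$ yields a strict improvement.

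The main obstacle I expect is the collision bookkeeping in moves (i) and (iii): any latent‑indistinguishable representation is forced to assign coordinate value $0$ to a negative in the anchor's class, so one must first drive those coordinates to $0$ (which is precisely what the \emph{joint} Jensen over $f(x^+)$ together with the colliding negatives accomplishes) before invoking \cref{def:sub-additive-loss}, which can merge negative coordinates only among themselves and not with the anchor. A secondary technical point is justifying the exchangeability and measurability manipulations needed to pass from the averaged inequality to the existence of a single good $\vec z^\star$.
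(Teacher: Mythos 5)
Your proof is correct and reaches the same conclusion, but it is organized differently from the paper's argument, and the comparison is worth making explicit. The paper proceeds one class at a time: for a fixed class $c^*$ it defines the ``partial collapse'' $f_{x^*}$ that maps only $\Supp(\calD_{c^*})$ to $f(x^*)$, proves $\Ex_{x^*\sim\calD_{c^*}}\big[\LNCEk(f_{x^*})\big]\le\LNCEk(f)$ by splitting on whether the anchor class equals $c^*$ (Jensen plus non-increasing in the colliding case, \cref{def:sub-additive-loss} in the non-colliding case), extracts a good $x^*$ by the probabilistic method, and then iterates over all $C$ classes. You instead collapse all classes at once into $\wtilde f_{\vec z}$ and establish $\Ex_{\vec z}\big[\LNCEk(\wtilde f_{\vec z})\big]\le\LNCEk(f)$ in a single pass, conditioning on the anchor class $c$ and running Jensen (over $x^+$ and the $c$-collisions), non-increasing (to push $f(x)^\top g(c)$ up to $1$), and \cref{def:sub-additive-loss} (applied once per remaining class $c'$) in sequence. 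The two proofs use exactly the same three technical tools and the same probabilistic-method existence step, so the difference is organizational rather than conceptual: the paper's per-class iteration keeps each step's collision bookkeeping minimal (only one collapsed class at a time), while your one-shot argument avoids the iteration entirely and yields the joint bound $\Ex_{\vec z}\big[\LNCEk(\wtilde f_{\vec z})\big]\le\LNCEk(f)$ directly --- a bound which could also be recovered from the paper's version by telescoping the per-class expectations. Your treatment of the strict-convexity case is actually somewhat more careful than the paper's (which the authors themselves flag as informal): you make precise the implication from ``Jensen is tight'' to ``$f$ is $\calD_c$-a.s.\ constant'' via the covariance/range argument, whereas the paper gestures at the same conclusion without the linear-algebra justification. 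One minor caveat: your aside that a strictly convex, non-increasing $\ell$ is strictly decreasing in each coordinate is not used in the argument that matters (strict convexity alone makes Jensen strict whenever the random vector is non-degenerate) and is not needed for the conclusion.
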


{\em Proof Sketch.}
We show the existence of $\wtilde{f}$ in an existential manner. For a fixed latent class $c^* \in \calC$, sample $x^* \sim \calD_{c^*}$ and define a representation $f_{x^*} : \calX\to \bbS^{d-1}$ that maps all inputs in the same class as $x^*$ to $f(x^*)$ leaving all other representations intact, namely,
\begin{align*}
	f_{x^*}(x) &~:=~ \begin{cases}
		f(x^*) & \text{if } c(x) = c^*\\
		f(x) & \text{if } c(x) \ne c^*
	\end{cases}
\end{align*}
We show that
\begin{align}
	\Ex_{x^* \sim \calD_c} \insquare{\LNCEk(f_{x^*})} ~\le~ \LNCEk(f)\,.\label{eq:latent-ind-rep-1}
\end{align}
This implies the existence of an $x^*$ in support of $\calD_{c^*}$ such that $\LNCEk(f_{x^*}) \le \LNCEk(f)$. Iteratively repeating this argument for each latent class $c^* \in \calC$, shows the existence of a latent-indistinguishable $\wtilde{f}$ with $\LNCEk(\wtilde{f}) \le \LNCEk(f)$.

In order to show \eqref{eq:latent-ind-rep-1}, we consider two cases depending on the latent class of the sampled positive pair $(x, x^+)$: (i) $c(x) = c(x^+) = c^*$ and (ii) $c(x) = c(x^+) \ne c^*$. In case (i), the argument holds for any convex and non-increasing loss. In case (ii), the argument holds for any loss satisfying \cref{def:sub-additive-loss}. We defer the proof details to \cref{apx:latent-ind-rep-proofs}.

\subsection{Uniform distribution over latent classes}\label{subsec:uniform-class-dist}

Our second structural result considers the case when in addition to non-overlapping latent classes, the distribution $\rho$ over the latent classes is uniform.
\begin{assumption}[Uniform Latent Classes]\label{ass:uniform}
	$\rho$ is uniform over the latent classes $\calC$.
\end{assumption}
Here, we show that the NCE optimal representations are precisely characterized by Simplex ETFs \cite{van1966equilateral,papyan2020prevalence}.
\begin{definition}[Simplex ETF Representation]\label{def:simplex-etf}
	Under \Cref{ass:non-overlap}, $f : \calX \to \bbS^{d-1}$ is a {\em Simplex Equiangular Tight Frame} (Simplex ETF) representation for a distribution $\calD$, if the following conditions hold:\OnlyICML{\vspace{-3mm}}
	\begin{itemize}[leftmargin=*,itemsep=0pt]
		\item $f$ is {\em latent-indistinguishable}, and
		\item $f(x)^\top f(x') = -1/(C-1)$ for all $x, x'$ s.t. $c(x) \ne c(x')$.\OnlyICML{\vspace{-3mm}}
	\end{itemize}
	$f$ is an {\em almost Simplex ETF} representation if there exists a Simplex ETF representation $f'$ such that $\Pr_{x \sim \calD}[f(x) = f(x')] = 1$.
\end{definition}

A latent-indistinguishable representation is said to be {\em equiangular} if $f(x)^\top f(x') = \alpha$ for all $x, x'$ such that $c(x) \ne c(x')$, for some value of $\alpha$. A Simplex ETF representation achieves the smallest value of $\alpha$, among all equiangular representations. Our second structural result shows that under the assumption of non-overlapping and uniform latent classes, Simplex ETF representations are NCE optimal.
\begin{theorem}\label{thm:simplex-etf}
	Under \cref{ass:non-overlap,ass:uniform}, any Simplex ETF representation $f$ minimizes $\LNCEk(f)$ for any convex and non-increasing loss $\ell$ satifying \cref{def:sub-additive-loss}.
	Moreover, if $\ell$ is also strictly convex (e.g. logistic loss), then (almost) Simplex ETF representations are the only minimizers of $\LNCEk(f)$.
\end{theorem}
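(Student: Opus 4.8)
The plan is to combine \cref{thm:latent-ind-rep} with a two-stage application of Jensen's inequality, organized around the random pattern of ``collisions'' among the negative samples (i.e.\ negatives that happen to lie in the same latent class as the positive pair). By \cref{thm:latent-ind-rep} (which is where \cref{def:sub-additive-loss} enters), $\inf_f \LNCEk(f)$ equals the infimum over latent-indistinguishable $f$, so it suffices to lower bound $\LNCEk(f)$ over latent-indistinguishable $f$. Such an $f$ is specified by $C$ unit vectors $u_1,\dots,u_C$ with Gram matrix $G_{cc'} = u_c^\top u_{c'}$, $G_{cc}=1$, $G\succeq 0$; a Simplex ETF representation is the choice $G_{cc'} = -1/(C-1)$ for all $c\neq c'$ (which exists as we may assume $d\ge C-1$, else the ``only minimizers'' claim is vacuous). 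Write $f^\star$ for any such representation; its NCE loss depends only on $G$, hence is the same for all of them.

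First I would rewrite $\LNCEk(f)$ by conditioning on the random set $T\subseteq\{1,\dots,k\}$ of collision coordinates: by \cref{ass:uniform} each negative's class is uniform on $\calC$, so each $i\in T$ independently with probability $1/C$, and given $T$ the remaining classes $c_i$ ($i\notin T$) are uniform on $\calC\setminus\{c\}$. For a latent-indistinguishable $f$, the argument of $\ell$ then equals $0$ in coordinates $i\in T$ and $1-u_c^\top u_{c_i}$ in coordinates $i\notin T$. Applying Jensen's inequality over the non-collision negatives only (using convexity of $\ell$) replaces each such coordinate by its conditional mean $b_c := 1-\tfrac{1}{C-1}\sum_{c'\neq c}u_c^\top u_{c'}$, giving $\LNCEk(f)\ge \tfrac1C\sum_c \Ex_T[\ell(w^T(b_c))]$, where $w^T(b)\in\bbR^k$ has entries $0$ on $T$ and $b$ off $T$. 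The reason for not averaging over all $k$ negatives at once is that for $f^\star$ the non-collision coordinates are already deterministic ($= C/(C-1)$), so this first Jensen step is an equality at $f^\star$; indeed a direct computation shows $\Ex_T[\ell(w^T(C/(C-1)))]=\LNCEk(f^\star)$.

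Next I set $h(b):=\Ex_T[\ell(w^T(b))]$; being an average of compositions of the convex, non-increasing $\ell$ with the affine maps $b\mapsto w^T(b)$, the function $h$ is convex and non-increasing on $\bbR$, and $\LNCEk(f)\ge\tfrac1C\sum_c h(b_c)$. A second Jensen step over $c$ gives $\tfrac1C\sum_c h(b_c)\ge h(\bar b)$ with $\bar b := \tfrac1C\sum_c b_c = 1-\tfrac{1}{C(C-1)}\sum_{c\neq c'}u_c^\top u_{c'}$, and positive semidefiniteness of $G$, via $0\le\norm{\textstyle\sum_c u_c}_2^2 = C+\sum_{c\neq c'}u_c^\top u_{c'}$, forces $\bar b\le C/(C-1)$. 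Since $h$ is non-increasing, $h(\bar b)\ge h(C/(C-1))=\LNCEk(f^\star)$; chaining these inequalities proves $\LNCEk(f)\ge\LNCEk(f^\star)$ for all latent-indistinguishable $f$, and hence (via \cref{thm:latent-ind-rep}) for all $f$, which is the optimality claim.

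For the ``only minimizers'' claim under strictly convex $\ell$, I would track all equality cases (assuming $k\ge 1$, as the $k=0$ loss is constant). The strict version of \cref{thm:latent-ind-rep} forces a minimizer to be almost latent-indistinguishable, so we may take it latent-indistinguishable with vectors $u_c$. Strict convexity of $\ell$ makes the first Jensen step strict unless, for every $T$ with $|T|<k$ (a positive-probability event), the random vector $(u_c^\top u_{c_i})_{i\notin T}$ is almost surely constant, i.e.\ $u_c^\top u_{c'}$ is the same for all $c'\neq c$ --- so every off-diagonal row of $G$ is constant. Strict convexity passes to $h$ through the $|T|<k$ terms, so the second Jensen step is strict unless all $b_c$ are equal, i.e.\ $G$ is equiangular with some common off-diagonal value $\gamma$; and a strictly convex, non-increasing function of one real variable is strictly decreasing, so $h(\bar b)=h(C/(C-1))$ forces $\bar b = C/(C-1)$, i.e.\ $\gamma=-1/(C-1)$. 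Hence a minimizer is an almost Simplex ETF representation; conversely any $f$ agreeing $\calD$-almost surely with a Simplex ETF representation has the same NCE loss and so is also a minimizer. The step requiring the most care is precisely this organization of the first Jensen step: averaging over all negatives simultaneously would collapse the bound to $\ell(\mathbf{1}_k)$, which is \emph{strictly} below $\LNCEk(f^\star)$ for strictly convex $\ell$ and hence too weak --- one must instead average only within each collision pattern, where $f^\star$ sits exactly at equality. A minor additional point is checking that $h$ inherits strict convexity and strict monotonicity from $\ell$, so the uniqueness argument applies to, e.g., the logistic loss.
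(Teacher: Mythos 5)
Your proof is correct and reaches the same intermediate inequality as the paper's (Claim~\ref{claim:equiangular}), namely that $\LNCEk(f)$ dominates the NCE loss of the equiangular representation whose common off-diagonal inner product is $\alpha := \Ex_{c\neq c'}[u_c^\top u_{c'}]$ (in your notation, $h(\bar b)$ with $\bar b = 1-\alpha$), followed by the PSD bound $\alpha\ge -1/(C-1)$ and monotonicity of $\ell$. The route to that inequality differs: the paper symmetrizes over uniformly random permutations $\pi$ of $\calC$ — exploiting that uniform $\rho$ makes $\LNCEk(f_\pi)=\LNCEk(f)$ — and applies Jensen once to the permutation average, which automatically fixes collision coordinates at $0$ since $u_{\pi(c)}^\top u_{\pi(c)}=1$; you instead condition explicitly on the collision set $T$, Jensen over the non-collision negatives given $(c,T)$, introduce the scalar function $h$, and Jensen again over $c$. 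The two-step decomposition makes the role of the collision pattern completely transparent (your closing remark about why one cannot Jensen over all negatives at once is exactly the point the permutation trick handles implicitly), and it sidesteps a small technicality in the paper's argument — the intermediate equiangular representation with parameter $\alpha>-1/(C-1)$ has Gram rank $C$, so it need not embed in $\bbS^{d-1}$ when $d=C-1$, whereas your $h(\bar b)$ bound is purely scalar. The uniqueness analysis (tracking equality in both Jensen steps and strict monotonicity of $h$) matches the paper's in substance and is carried out carefully. Overall: same key lemma and same proof skeleton, with a more elementary and slightly more robust organization of the Jensen step.
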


\noindent \Cref{thm:simplex-etf} follows immediately from combining \cref{thm:latent-ind-rep} with the following claim.

\begin{restatable}{claim}{claimEquiangular}\label{claim:equiangular}
	Under \cref{ass:non-overlap,ass:uniform}, for any convex, non-increasing loss $\ell$, it holds for all (almost) latent-indistinguishable representations $f : \calX \to \bbS^{d-1}$, that $\LNCEk(\wtilde{f}) \le \LNCEk(f)$ for any Simplex ETF $\wtilde{f} : \calX \to \bbS^{d-1}$.
	Moreover if $\ell$ is strictly convex (e.g. logistic loss), then equality holds only if $f$ is an (almost) Simplex ETF representation. \info{PK: I couldn't prove tightness of hinge loss, could only do it for strictly convex losses.}
\end{restatable}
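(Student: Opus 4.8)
\emph{Proof plan.} The plan is to reduce the statement to a finite-dimensional inequality about the Gram matrix of $f$, to isolate the ``collision'' randomness among the negative samples, and then to conclude via a single application of Jensen's inequality together with the monotonicity of $\ell$. Throughout I assume $C \ge 2$, so that the Simplex ETF is defined.

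First I would reduce to the case where $f$ is genuinely latent-indistinguishable: since $\LNCEk$ depends on $f$ only through its values on samples, replacing an almost latent-indistinguishable $f$ by an a.s.-equal latent-indistinguishable representation changes neither $\LNCEk(f)$ nor the statement to be proved (``Simplex ETF'' becoming ``almost Simplex ETF''). Writing $u_c := f(x)$ for any $x$ with $c(x) = c$, which is a unit vector, and using the generative process of $\DNCE$ together with $\|u_c\|_2 = 1$ and \cref{ass:uniform}, one obtains
\[
\LNCEk(f) ~=~ \Ex_{c, c_1, \dots, c_k}\, \ell\!\inparen{\tuple{1 - u_c^\top u_{c_i}}_{i=1}^k}\,,
\]
where $c, c_1, \dots, c_k$ are i.i.d.\ uniform on $\calC$; for a Simplex ETF $\wtilde f$ the corresponding argument of $\ell$ equals $0$ in coordinate $i$ if $c_i = c$ and $C/(C-1)$ if $c_i \ne c$, deterministically.

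Next, for a draw $(c, c_1, \dots, c_k)$ let $T := \{\, i : c_i = c\,\}$ be the set of \emph{collisions}. The crucial observation is that, precisely because $\rho$ is uniform, the distribution of $T$ (each coordinate lies in $T$ independently with probability $1/C$) does not depend on $c$, and conditioned on $T$ and $c$ the classes $(c_i)_{i \notin T}$ are i.i.d.\ uniform on $\calC \setminus \{c\}$. Conditioning on $T$: for any latent-indistinguishable representation the $i$-th argument of $\ell$ equals $0$ for $i \in T$, and equals $1 - u_c^\top u_{c_i}$ for $i \notin T$. By convexity of $\ell$ and Jensen's inequality over the remaining randomness of $c$ and $(c_i)_{i \notin T}$,
\[
\Ex_{c,\, (c_i)_{i \notin T}}\, \ell(v) ~\ge~ \ell\!\inparen{\Ex_{c,\, (c_i)_{i \notin T}} v}\,,
\]
and a short computation (using $\Ex_{c_i \ne c}[u_c^\top u_{c_i}] = \tfrac{1}{C-1}\inparen{C\, u_c^\top \bar u - 1}$ with $\bar u := \tfrac1C \sum_{c'} u_{c'}$, followed by $\Ex_c[u_c^\top \bar u] = \|\bar u\|_2^2$) shows that this mean vector is $0$ in coordinates $i \in T$ and $\tfrac{C}{C-1}\inparen{1 - \|\bar u\|_2^2}$ in coordinates $i \notin T$, hence entrywise at most the deterministic argument of $\ell$ at the Simplex ETF. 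Since $\ell$ is non-increasing in each coordinate, $\Ex_{c,(c_i)_{i\notin T}} \ell(v) \ge \Ex_{c,(c_i)} \ell(v^{\mathrm{ETF}})$ for every fixed $T$, and averaging over $T$ yields $\LNCEk(f) \ge \LNCEk(\wtilde f)$.

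For the equality case, suppose $\ell$ is strictly convex and $\LNCEk(f) = \LNCEk(\wtilde f)$. Then the chain above must be tight for every $T$ of positive probability, in particular for $T = \emptyset$. Tightness of Jensen forces $v = \tuple{1 - u_c^\top u_{c_i}}_{i}$ to be a.s.\ constant, i.e.\ $u_c^\top u_{c'}$ takes a single value $\alpha$ over all ordered pairs $c \ne c'$, so $f$ is equiangular; tightness of the monotonicity step forces $\ell$ to agree at the all-$\tfrac{C}{C-1}(1-\|\bar u\|_2^2)$ vector and the all-$\tfrac{C}{C-1}$ vector, and since a non-increasing strictly convex function of one variable is strictly decreasing (restrict $\ell$ to a coordinate line), this forces $\|\bar u\|_2 = 0$, i.e.\ $\sum_{c'} u_{c'} = 0$. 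Then $0 = \norm{\sum_{c'} u_{c'}}_2^2 = C + C(C-1)\alpha$ gives $\alpha = -1/(C-1)$, so $f$ is a Simplex ETF (respectively an almost Simplex ETF in the almost latent-indistinguishable case), as desired. I expect the main obstacle to be exactly this coupling between the positive class $c$ and the collision events: a direct application of Jensen to all of $(c, c_1, \dots, c_k)$ is too lossy because the Simplex ETF does not minimize the fully-averaged surrogate --- a collision always contributes a $0$ to the argument of $\ell$ for \emph{every} latent-indistinguishable representation, so averaging over whether a collision occurs shifts the comparison point away from the ETF's value. Conditioning on the collision pattern $T$, and exploiting that uniformity of $\rho$ makes $T$ independent of $c$ (so that on the non-colliding coordinates all latent-indistinguishable representations are compared on equal footing), is what makes the Jensen step tight at the ETF; the remaining ingredients --- the mean computation, the entrywise monotonicity comparison, and the one-dimensional strict-monotonicity fact --- are routine.
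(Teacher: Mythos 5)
Your proof is correct, and it takes a genuinely different route from the paper's. The paper proves the claim by a permutation-symmetry argument: since $\rho$ is uniform, $\LNCEk(f_\pi) = \LNCEk(f)$ for every permutation $\pi$ of $\calC$, so one can average over $\pi$ and apply Jensen to land exactly at the equiangular representation $\wtilde f$ with $\wtilde u_c^\top \wtilde u_{c'} = \Ex_{c\ne c'}[u_c^\top u_{c'}]$ (here the Gram-matrix bound $\Ex_{c\ne c'}[u_c^\top u_{c'}] \ge -1/(C-1)$ is what guarantees such a $\wtilde f$ exists on the sphere); a separate short argument then shows the Simplex ETF is best among equiangular representations. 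You instead condition on the collision pattern $T = \{i : c_i = c\}$, exploit that uniformity of $\rho$ decouples $T$ from $c$ (and makes the non-colliding $c_i$ i.i.d.\ uniform on $\calC\setminus\{c\}$), apply Jensen over the residual randomness, and compare the resulting mean vector entrywise against the ETF's argument, finishing by monotonicity. The two strategies exploit the same uniformity assumption in different guises: the paper's permutation-averaging implicitly preserves the collision pattern so that the post-Jensen vector coincides with the ETF-like representation's loss, while your version surfaces this collision accounting explicitly. The paper's proof is more compact and avoids the mean-vector computation; yours is more self-contained (it compares directly against the ETF rather than going through an intermediate equiangular $\wtilde f$ and the separate existence/optimality steps), and your handling of the strict-convexity equality case is a bit more careful than the paper's terse ``it is easy to see.'' One small wording fix: ``restrict $\ell$ to a coordinate line'' should be either ``iterate the strict-monotonicity over coordinates one at a time'' or ``restrict $\ell$ to the diagonal line $t \mapsto \ell(t,\dots,t)$''; the two comparison vectors differ in all coordinates, so the one-coordinate restriction by itself isn't quite the right object, though the conclusion is the same.
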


{\em Proof Sketch.}
Let $u_c$ denote the (common) representation for all $x$ in latent class $c$, namely $f(x) = u_{c(x)}$. Observe that $\|\sum_{c \in \calC} u_c\|_2^2 = C + \sum_{c \ne c'} u_c^\top u_{c'} \ge 0$ and hence $\Ex_{c, c' \sim \rho} [u_c^\top u_{c'} \mid c \ne c'] \ge -1/(C-1)$ (under \Cref{ass:uniform} that $\rho$ is uniform over $\calC$). Let $\wtilde{f}$ be an {\em equiangular} representation given as $\wtilde{f}(x) = \wtilde{u}_{c(x)}$ satisfying
\[
\wtilde{u}_c^\top \wtilde{u}_{c'} = \begin{cases} 1 & \text{if } c = c'\\ \Ex_{c, c' \sim \rho}[u_c^\top u_{c'} \mid c \ne c'] & \text{if } c \ne c'\end{cases},
\]
We show that $\LNCEk(\wtilde{f}) \le \LNCEk(f)$ for any convex loss $\ell$ (via Jensen's inequality). Finally, for any non-increasing loss $\ell$, any Simplex ETF minimizes $\LNCEk(\cdot)$ among all equiangular representations, as it achieves the smallest value of $u_c^\top u_{c'}$. We defer the proof details to \cref{apx:proof-equiangular-proof}.

\subsection{Downstream performance of NCE Optimal Representations}\label{subsec:downstream-conjecture}

From \cref{thm:simplex-etf}, we have that the NCE optimal representation in the case of non-overlapping latent classes with uniform distribution over them, in fact {\em does not depend} on $k$, the number of negatives in $\LNCEk(\cdot)$. And hence, for $f_k := \argmin_{f} \LNCEk(f)$, it holds that $\Lsup(f_k)$ is independent of $k$.
\Experimental{ 
More strongly, the NCE optimal representation turns out to also be the optimal representation minimizing $\Lsup$ (proof in \cref{apx:supervised-optimal-proof}).
\begin{restatable}{theorem}{thmSupervisedOptimal}\label{thm:supervised-optimal}
	Under \cref{ass:non-overlap,ass:uniform}, any Simplex ETF representation $f$ minimizes $\Lsup(f)$ for any convex, non-increasing loss $\ell$ satisfying \cref{def:sub-additive-loss}.
\end{restatable}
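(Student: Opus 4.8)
The plan is to sandwich $\Lsup(f)$ between a value attained by a Simplex ETF representation (via the obvious predictors) and a matching lower bound that holds for \emph{every} $f : \calX \to \bbS^{d-1}$. Throughout, write $\phi(t) := \ell(\tuple{t,\dots,t})$ for the restriction of $\ell$ to the $(C-1)$-dimensional diagonal; since $\ell$ is convex and non-increasing in each coordinate, so is $\phi$. For the upper bound: if $f$ is a Simplex ETF with $f(x) = u_{c(x)}$, choosing $w_c = u_c$ (admissible since $\|u_c\|_2 = 1$) gives $f(x)^\top(w_c - w_{c'}) = 1 - u_c^\top u_{c'} = \tfrac{C}{C-1}$ for every $c' \ne c$, hence $\Lsup(f) \le \phi\!\left(\tfrac{C}{C-1}\right)$.

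For the lower bound, fix an arbitrary $f$ and arbitrary admissible predictors $\tuple{w_c}_{c\in\calC}$, and let $\bar f_c := \Ex_{x\sim\calD_c}[f(x)]$, so $\|\bar f_c\|_2 \le 1$ by convexity of the norm. I would proceed in three reductions. First, since $f(x) \mapsto f(x)^\top(w_c - w_{c'})$ is linear, Jensen in $x \sim \calD_c$ gives $\Lsup(f,\tuple{w_c}_{c\in\calC}) \ge \tfrac1C\sum_c \ell(\tuple{\bar f_c^\top(w_c - w_{c'})}_{c'\ne c})$. Second, apply \cref{def:sub-additive-loss} with $S$ equal to the full coordinate set to each summand; this collapses each term onto the diagonal, giving $\ell(\tuple{\bar f_c^\top(w_c-w_{c'})}_{c'\ne c}) \ge \tfrac{1}{C-1}\sum_{c''\ne c}\phi(\bar f_c^\top(w_c - w_{c''}))$. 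Third, use convexity of $\phi$ (Jensen over the $C(C-1)$ ordered pairs) to obtain
\[
\Lsup(f,\tuple{w_c}_{c\in\calC}) ~\ge~ \phi\!\left(\tfrac{1}{C(C-1)}\sum\nolimits_{c\ne c''}\bar f_c^\top(w_c - w_{c''})\right).
\]
Taking the infimum over $\tuple{w_c}$ on the left, it remains only to show the scalar argument of $\phi$ is always at most $\tfrac{C}{C-1}$, as $\phi$ is non-increasing.

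This last bound is the crux. Writing $\bar F := \sum_c \bar f_c$, $W := \sum_c w_c$, and $r_c := C w_c - W$, a short manipulation gives $\sum_{c\ne c''}\bar f_c^\top(w_c - w_{c''}) = C\sum_c\bar f_c^\top w_c - \bar F^\top W = \sum_c \bar f_c^\top r_c$. Maximizing over $\bar f_c$ in the unit ball bounds this by $\sum_c\|r_c\|_2$, and then by Cauchy--Schwarz together with $\sum_c\|r_c\|_2^2 = C^2\sum_c\|w_c\|_2^2 - C\|W\|_2^2 \le C^3$ (using $\|w_c\|_2\le 1$ and $\|W\|_2^2\ge 0$), we get $\sum_c\|r_c\|_2 \le \sqrt{C\cdot C^3} = C^2$, so the argument of $\phi$ is $\le C^2/(C(C-1)) = C/(C-1)$. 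Combining with the upper bound, every Simplex ETF $f$ satisfies $\Lsup(f) = \phi(C/(C-1))$ and hence minimizes $\Lsup$. I expect the main difficulty to be exactly this scalar optimization: it is where both norm constraints (on the $w_c$ and, post-Jensen, on the $\bar f_c$) must be used simultaneously, and tracking its equality conditions — unit $w_c$ summing to zero with $\bar f_c = w_c$ — would additionally recover an "only if" characterization in the spirit of \cref{claim:equiangular}, should one want it.
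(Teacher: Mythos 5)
Your proof is correct, and it takes a genuinely different route from the paper's. The paper mirrors the proof of the NCE result (\cref{thm:simplex-etf}): it first reduces to latent-indistinguishable representations (by placing each class's representation at the $\argmin$ of the per-class loss against the fixed linear heads), then applies a permutation-symmetrization plus Jensen argument (\cref{claim:supervised-optimal-equiangular}) to reduce an equiangular configuration to a Simplex ETF, with a somewhat delicate construction of the intermediate $\wtilde{u}_c, \wtilde{w}_c$. Your argument instead exhibits an explicit universal lower bound $\Lsup(f) \ge \phi(C/(C-1))$ valid for every normalized $f$ and every admissible $\tuple{w_c}$, and shows the Simplex ETF with $w_c = u_c$ attains it. The three reductions (Jensen over $x\sim\calD_c$, the full-$S$ instance of \cref{def:sub-additive-loss} to collapse onto the diagonal, and Jensen on the scalar $\phi$) are clean, and the final scalar step — rewriting the average as $\tfrac{1}{C(C-1)}\sum_c \bar f_c^\top(Cw_c - W)$ and bounding it by $C/(C-1)$ via Cauchy--Schwarz with $\sum_c\|Cw_c - W\|_2^2 = C^2\sum_c\|w_c\|_2^2 - C\|W\|_2^2 \le C^3$ — is exactly the place where both norm constraints ($\|w_c\|\le 1$ and $\|\bar f_c\|\le 1$) enter, which the paper's proof handles only implicitly through the equiangular construction. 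Your approach buys a closed-form optimal value, avoids the latent-indistinguishability machinery entirely, and as you note, tracking equality in the three inequalities would yield the ``only if'' direction for strictly convex $\ell$; the paper's approach buys structural reuse of the NCE argument.
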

}{\info{It would be nice to show that in fact Simplex ETF is a minimizer of $\Lsup$ in this case.}}%
But what about the case where the marginal over the latent classes is not uniform? We conjecture that just under \cref{ass:non-overlap}, the supervised learning loss of the NCE optimal representation is non-increasing in $k$.
\begin{conjecture}\label{conj:sup-loss-non-increasing}
	For all $C \ge 3$, under \cref{ass:non-overlap}, for all distributions $\rho$ over classes $\calC$: $\Lsup(f_k)$ is non-increasing in $k$, where $f_k := \argmin_f \LNCEk(f)$.
\end{conjecture}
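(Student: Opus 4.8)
\emph{Proof proposal.} By \cref{thm:latent-ind-rep}, under \cref{ass:non-overlap} the minimizer may be taken latent-indistinguishable, and such a representation is determined up to rotation by its $C\times C$ Gram matrix $M$ with $M_{cc'} = f(x)^\top f(x')$ for $c(x)=c$, $c(x')=c'$; as $f$ ranges over latent-indistinguishable maps into $\bbS^{d-1}$ (with $d$ large enough, i.e. $d\ge C$), $M$ ranges over the elliptope $\calM_C := \set{M \in \bbR^{C\times C} : M \succeq 0,\ M_{cc}=1\ \forall c}$. Writing $g_k(M) := \LNCEk(f)$ and $h(M) := \Lsup(f)$, one has $g_k(M) = \Ex_{c\sim\rho}\Ex_{c_1,\dots,c_k\sim\rho}\,\ell\inparen{\tuple{1-M_{cc_i}}_{i=1}^k}$, which is convex in $M$ (and, for the logistic loss, strictly convex on $\calM_C$), so $M^{(k)} := \argmin_{M\in\calM_C} g_k$ is the solution of a convex program (unique for logistic loss); and $h$ is likewise convex on $\calM_C$, being the partial minimization of the jointly convex map $(M,N,Q)\mapsto \Ex_{c\sim\rho}\,\ell\inparen{\tuple{N_{cc}-N_{cc'}}_{c'\ne c}}$ over the convex set where the $2C\times 2C$ block matrix with diagonal blocks $M,Q$ and off-diagonal block $N$ is PSD and $Q_{cc}\le 1$ (here $N_{cc'}$ plays the role of $f(x)^\top w_{c'}$). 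Thus \cref{conj:sup-loss-non-increasing} is equivalent to: $h\inparen{M^{(k)}}$ is non-increasing in $k$.

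The plan is to factor this through a partial order $\preceq$ on $\calM_C$ — a ``separation order'' whose maximal element is the Simplex ETF when $\rho$ is uniform — and prove two things: \textbf{(i)} $h$ is non-increasing along $\preceq$, so more-separated configurations have weakly smaller downstream loss; and \textbf{(ii)} the optimal path $k\mapsto M^{(k)}$ is $\preceq$-monotone. For (i) the natural route is a transport argument: given $M\preceq M'$, map any feasible head configuration $(N,Q)$ for $M$ to a feasible one for $M'$ with no larger objective, in the spirit of the equiangular smoothing of \cref{claim:equiangular}; in the uniform case this degenerates to the statement that the Simplex ETF simultaneously maximizes $\preceq$ and minimizes $h$ (consistent with \cref{thm:simplex-etf}). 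For (ii) one analyzes the parametric minimizer through the KKT system of $\min_{\calM_C} g_k$, treating $k$ as a continuous parameter, deriving an ODE for $M^{(k)}$ and arguing its velocity points in the $\preceq$-increasing direction. The structural input — already implicit in the proof of \cref{claim:equiangular} — is that raising $k$ reweights the two kinds of contributions to $\LNCEk$: ``collision'' negatives (those with $c(x_i^-)=c(x)$, contributing the constant $0$ to the corresponding loss argument, i.e. a $1$ inside $\log$ for logistic loss) versus genuine separation terms. Concretely, for logistic loss $g_1(M) = \Ex_c\!\insquare{\rho(c)\log 2} + \Ex_c\!\sum_{c'\ne c}\rho(c')\log\inparen{1+e^{-\beta(1-M_{cc'})}}$, so the collision mass is an additive constant and $M^{(1)}$ minimizes only the $\rho(c)\rho(c')$-weighted separation part; whereas for large $k$ a law-of-large-numbers computation gives $g_k(M) = \log k + \Ex_c\log\inparen{\rho(c) + \sum_{c'\ne c}\rho(c')e^{-\beta(1-M_{cc'})}} + o(1)$, so the collision mass $\rho(c)$ sits \emph{inside} the logarithm alongside the separation terms and couples them. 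Tracking how the minimizer migrates as this coupling is turned on is the heart of the argument.

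The main obstacle is step (ii), and it is where present understanding stops. Generically $M^{(k)}$ lies on the boundary of $\calM_C$ — it is low rank (rank $C-1$ at the Simplex ETF) — so the optimality conditions form a nonlinear complementarity system in which the active PSD face can itself change with $k$; controlling the sign of the drift of $M^{(k)}$ uniformly in $k$, and doing so in precisely the order $\preceq$ that makes step (i) work, is delicate. Moreover the ``right'' $\preceq$ is itself unclear: the naive entrywise order on the off-diagonals is almost surely wrong for $C\ge 3$ (it is not total and $h$ need not be monotone for it), so part of the task is to identify the correct order — plausibly a majorization-type order on the off-diagonal Gram structure, or the order defined directly by $h(M)\ge h(M')$ together with a betweenness property making step (ii) tractable. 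A reasonable first target is $C=3$: after quotienting by the symmetries of $\rho$ one gets a low-dimensional family on the boundary face of $\calM_3$ on which the continuous-$k$ ODE can be written out and its monotonicity checked by hand; the uniform case \cref{ass:uniform} is the degenerate instance where this ODE has the constant solution $M^{(k)}\equiv$ Simplex ETF, a sanity check that recovers \cref{thm:simplex-etf}. An alternative avoiding partial orders is a direct two-point comparison — show $h(M^{(k+1)})\le h(M^{(k)})$ by perturbing along the segment from $M^{(k)}$ to $M^{(k+1)}$ and combining first-order optimality for $g_k$ and $g_{k+1}$ with the $o(1)$-asymptotics above — but making that comparison quantitative for every $k$ appears to demand essentially the same control of the optimum's location.
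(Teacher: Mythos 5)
The statement you are attempting is \cref{conj:sup-loss-non-increasing}, which the paper explicitly leaves \emph{open}: there is no proof in the paper to compare against, only the convex reformulation of \cref{sec:population-experiments} (over the elliptope of $C\times C$ correlation matrices, see Eq.~\eqref{eq:NCE-convex-formulation}) together with numerical evidence. Your reduction --- invoke \cref{thm:latent-ind-rep} to pass to latent-indistinguishable $f$, parameterize by the Gram matrix $M\in\calM_C$, observe $g_k(M)=\Ex_{c,c_{1:k}\sim\rho}\,\ell(\tuple{1-M_{cc_i}}_i)$ is convex --- is exactly the paper's own setup in \cref{sec:population-experiments}. Your observation that $h(M)=\Lsup$ is itself a convex function of $M$, via partial minimization of a jointly convex objective over the PSD cone of $2C\times 2C$ block matrices, is correct and is not stated in the paper; likewise the computations of $g_1(M)$ (collision mass is an additive constant) and the $k\to\infty$ asymptotics $g_k(M)=\log k + \Ex_c\log\bigl(\rho(c)+\sum_{c'\ne c}\rho(c')e^{-\beta(1-M_{cc'})}\bigr)+o(1)$ (collision mass couples multiplicatively inside the $\log$) are sound and sharpen the picture of \emph{how} the optimizer moves with $k$. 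These are useful additions.

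That said, the proposal as written is not a proof, and you say so yourself. The genuine gap is your step (ii): you have neither identified the partial order $\preceq$ on $\calM_C$ nor shown the KKT trajectory $k\mapsto M^{(k)}$ is $\preceq$-monotone, and these two tasks are coupled in a way that makes the program circular as stated (``the order defined directly by $h(M)\ge h(M')$ together with a betweenness property'' is not yet a definition one can work with). The specific obstructions you name are real: the optimum generically sits on a low-rank face of the elliptope, so the complementarity system's active set can change with $k$, and the naive entrywise order on off-diagonals already fails for $C\ge 3$. None of this is a flaw in your reasoning --- it is an accurate diagnosis of why the conjecture is hard --- but the statement remains unproved, which is consistent with the paper presenting it as a conjecture backed only by the simulations of \cref{fig:simulated}. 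If you want to push further, the $C=3$ reduction you suggest (quotient by the $\rho$-symmetry and write the continuous-$k$ ODE on the boundary face of $\calM_3$) is the right first concrete target, and your $k=1$ vs.\ $k\to\infty$ endpoint comparison gives the two limits the trajectory must interpolate between.
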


Note that the statement of the conjecture holds trivially for $C=2$, even under non-uniform distribution $\rho$, since in this case, $f_k$ is a latent-indistinguishable representation that maps points of the two classes to anti-podal points. Thus, $f_k$ is the same for all $k$ and hence $\Lsup(f_k)$ is non-increasing (in fact constant) in $k$.

While we are unable to prove this conjecture formally, we provide empirical evidence towards this conjecture in \cref{sec:population-experiments} by numerically computing NCE optimal representations.

\section{Empirical Evaluation of the Supervised Loss of NCE Optimal Representations}\label{sec:population-experiments}

The high dimensional space of representations poses a challenge for numerically computing the NCE optimal representation $\LNCEk(\cdot)$ for any $k$. However, under \cref{ass:non-overlap}, using our structural result (\cref{thm:latent-ind-rep}), we can reformulate the goal as a convex optimization problem over a small number of variables.

For any latent-indistinguishable representation, let $u_c$ denote the (common) representation for all $x$ in latent class $c$, namely $f(x) = u_{c(x)}$. Let $Z \in \bbR^{C \times C}$ be a matrix given by $Z_{c,c'} := u_{c}^\top u_{c'}$ for $c, c' \in \calC$. Note that $Z$ encodes a latent-indistinguishable representation if and only if $Z$ is a {\em correlation matrix}, that is, Z is positive semi-definite with $Z_{c,c} = 1$ for all $c \in \calC$.

The key observation is that $\LNCEk(f)$ is convex when formulated in terms of $Z$:
\begin{align}
	\LNCEk(f) ~=~ \Ex_{c,c_{1:k} \sim \rho} \insquare{\ell(\tuple{1 - Z_{c,c_i}}_{i=1}^k)}\label{eq:NCE-convex-formulation}
\end{align}

\paragraph{Details of the Numerical Simulation.}
In order to empirically validate our proposed \cref{conj:sup-loss-non-increasing} we vary the number of classes $C \in \{3, 9\}$. For each $c \in \calC$, we generate a class distribution drawn from a Dirichlet prior with a uniform parameter $\alpha$. We vary alpha from $1$ to $4$ to get four class distributions for each value of $C$.

Next for each $C$ and class distribution 
we numerically estimate the minimizer $Z$ of \eqref{eq:NCE-convex-formulation} by performing projected stochastic gradient descent over the space of $C \times C$ correlation matrices, with a decaying step size (as suggested by \citet{lacostejulien12simpler}). We sample stochastic gradients by averaging the gradients $\nabla_Z\, \ell(\tuple{1 - Z_{c,c_i}}_{i=1}^k)$ over independently sampled mini-batches of $(c, c_{1:k}) \sim \rho^{k+1}$ at each step. The projection to the space of correlation matrices is done via the algorithm of \citet{higham02computing}. In our experiments, we fix the mini batch size to be $10000$, and perform $1000$ steps of projected gradient descent with an initial step size of $50$. We compute the mean of the $Z$ matrices computed across $400$ independent runs of projected gradient descent and then extract the per class embeddings $\tuple{u_c}_{c \in \calC}$ via a Cholesky decomposition of the mean matrix. Finally, we optimize the class weighted logistic loss over $\tuple{w_c}_{c \in \calC}$ to compute the value of the supervised loss. 

In \cref{fig:simulated} we plot for each $c \in C$, the supervised loss as a function of $k$ for across values of $\alpha$. We find that the downstream supervised loss obtained via the NCE optimal representation is essentially non-increasing in $k$. In particular, 22 out of the 28 curves in \cref{fig:simulated} are strictly decreasing in $k$. However, the remaining 6 curves are non-monotonic at some values of $k$, which seems to contradict our conjecture. But we suspect that this is likely because of imprecision in our numerical simulation procedure.\unsure{PK: Double check this paragraph.}


\tikzset{
	curvethick/.style = {line width=0.5pt},
	colorThree/.style = {curvethick,Red,mark=star},
	colorFour/.style = {curvethick,Cerulean,mark=o},
	colorFive/.style = {curvethick,Green,mark=*},
	colorSix/.style = {curvethick,Orange,mark=diamond*},
	colorSeven/.style = {curvethick,Purple,mark=triangle*},
	colorEight/.style = {curvethick,Gray,mark=square*},
	colorNine/.style = {curvethick,Dandelion,mark=pentagon*}
}
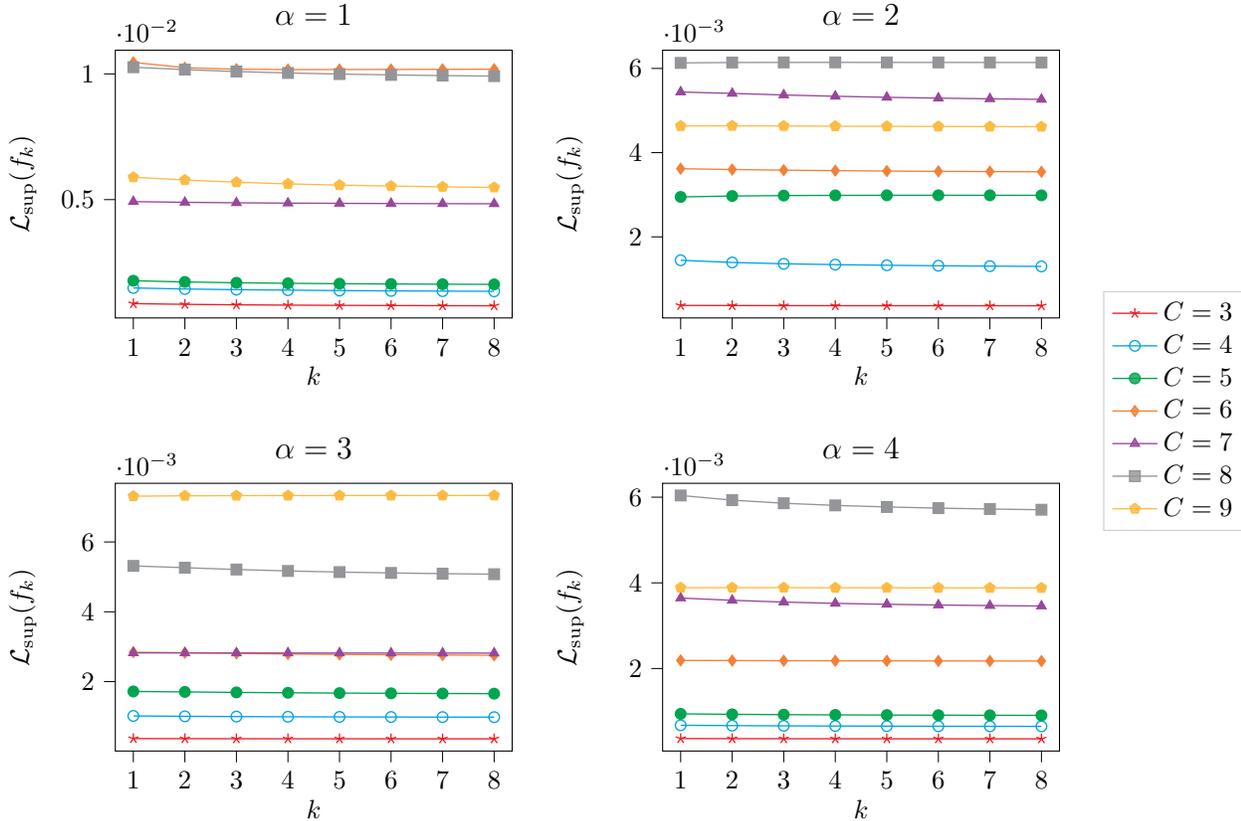
\begin{figure*}[htb]   
	\centering
\begin{tikzpicture}
	
	\definecolor{darkgray176}{RGB}{176,176,176}
	\definecolor{green}{RGB}{0,128,0}
	\definecolor{lightgray204}{RGB}{204,204,204}
	\definecolor{orange}{RGB}{255,165,0}
	\definecolor{purple}{RGB}{128,0,128}
	\definecolor{violet}{RGB}{238,130,238}
	
	\begin{groupplot}[group style={group size=2 by 2}]
		\nextgroupplot[
		legend cell align={left},
		legend style={fill opacity=0.8, draw opacity=1, text opacity=1, at={(2.85,0.1)}, draw=lightgray204},
		width=0.4\textwidth,
		height=0.3\textwidth,
		tick align=outside,
		tick pos=left,
		title={\large \(\displaystyle \alpha=1\)},
		x grid style={darkgray176},
		xlabel={\(\displaystyle k\)},
		xmin=0.65, xmax=8.35,
		xtick={1,2,3,4,5,6,7,8},
		xtick style={color=black},
		y grid style={darkgray176},
		ylabel={\(\displaystyle \mathcal{L}_{\mathrm{sup}}(f_k)\)},
		ymin=0.000293929736597684, ymax=0.0109415870933274,
		ytick style={color=black}
		]
		\addplot [colorThree]
		table {%
			1 0.000864264386413986
			2 0.000835022195896602
			3 0.000816327002047318
			4 0.000803625188624551
			5 0.000794474430284652
			6 0.000787567850171522
			7 0.000782206654454238
			8 0.000777914161903579
		};
		\addlegendentry{$C=3$}
		\addplot [colorFour]
		table {%
			1 0.00148657842381933
			2 0.00144471809628792
			3 0.00141876080527398
			4 0.00139859625520036
			5 0.0013824416081181
			6 0.00136987269383008
			7 0.00136005210736454
			8 0.00135229883567101
		};
		\addlegendentry{$C=4$}
		\addplot [colorFive]
		table {%
			1 0.00177407384646262
			2 0.00172515182938503
			3 0.00169315273877784
			4 0.00167133116885483
			5 0.00165578761707156
			6 0.00164427546725989
			7 0.00163545830318834
			8 0.00162852029569043
		};
		\addlegendentry{$C=5$}
		\addplot [colorSix]
		table {%
			1 0.0104576026680215
			2 0.0102463942678871
			3 0.0101871948530334
			4 0.0101720158410211
			5 0.0101737127358902
			6 0.0101762914621283
			7 0.0101806790750775
			8 0.0101863648638138
		};
		\addlegendentry{$C=6$}
		\addplot [colorSeven]
		table {%
			1 0.00492069849059385
			2 0.00489280728451543
			3 0.00487359206588965
			4 0.00486026529656064
			5 0.00485126174458461
			6 0.00484481162660876
			7 0.00483925833848664
			8 0.00483486447568337
		};
		\addlegendentry{$C=7$}
		\addplot [colorEight]
		table {%
			1 0.010264725708124
			2 0.0101709132899266
			3 0.0100942661802395
			4 0.0100371186963399
			5 0.00999327511798791
			6 0.009960057880492
			7 0.00993272866642074
			8 0.00991148220364673
		};
		\addlegendentry{$C=8$}
		\addplot [colorNine]
		table {%
			1 0.00589064543184291
			2 0.00578053223403851
			3 0.00569194166196142
			4 0.00562604260381486
			5 0.00557677472433324
			6 0.00553881328416977
			7 0.00550886929286312
			8 0.0054848472236539
		};
		\addlegendentry{$C=9$}
		
		\nextgroupplot[
		width=0.4\textwidth,
		height=0.3\textwidth,
		tick align=outside,
		tick pos=left,
		title={\large \(\displaystyle \alpha=2\)},
		x grid style={darkgray176},
		xlabel={\(\displaystyle k\)},
		xmin=0.65, xmax=8.35,
		xtick={1,2,3,4,5,6,7,8},
		xtick style={color=black},
		xshift=1cm,
		y grid style={darkgray176},
		ylabel={\(\displaystyle \mathcal{L}_{\mathrm{sup}}(f_k)\)},
		ymin=7.37246931748032e-05, ymax=0.00643267966582801,
		ytick style={color=black}
		]
		\addplot [colorThree]
		table {%
			1 0.000371217996037455
			2 0.000368327973631298
			3 0.000366444165277666
			4 0.000365170752418666
			5 0.000364278506540397
			6 0.000363630643956372
			7 0.000363145338113208
			8 0.000362768101022676
		};
		\addplot [colorFour]
		table {%
			1 0.00144394154211838
			2 0.00139177216926708
			3 0.00136032693654402
			4 0.00133990698492081
			5 0.00132485936376717
			6 0.00131342880799431
			7 0.00130454303292699
			8 0.00129750337220637
		};
		\addplot [colorFive]
		table {%
			1 0.00294819895739079
			2 0.00296997423038911
			3 0.00298053638153192
			4 0.00298506318053457
			5 0.00298720514233663
			6 0.00298729842703054
			7 0.00298721062348384
			8 0.00298629872612769
		};
		\addplot [colorSix]
		table {%
			1 0.00361813444172853
			2 0.00359942979771961
			3 0.00358413910168721
			4 0.00357236554532294
			5 0.00356325677324955
			6 0.0035560610365697
			7 0.0035501950855864
			8 0.00354542915575832
		};
		\addplot [colorSeven]
		table {%
			1 0.00544052328353195
			2 0.00540664672976453
			3 0.00536962836674585
			4 0.00533919581463997
			5 0.00531398842446081
			6 0.00529432836941566
			7 0.0052783849652969
			8 0.00526488835900236
		};
		\addplot [colorEight]
		table {%
			1 0.0061317337566912
			2 0.00614086427977613
			3 0.00614335865327292
			4 0.00614363625798013
			5 0.00614313674779683
			6 0.0061425060085096
			7 0.00614193084927158
			8 0.00614120404686543
		};
		\addplot [colorNine]
		table {%
			1 0.00463208663070788
			2 0.004636599467564
			3 0.00463278799436577
			4 0.00462832039865244
			5 0.00462445360919347
			6 0.00462100160433923
			7 0.0046180678076115
			8 0.0046156764363598
		};
		
		\nextgroupplot[
		width=0.4\textwidth,
		height=0.3\textwidth,
		tick align=outside,
		tick pos=left,
		title={\large \(\displaystyle \alpha=3\)},
		x grid style={darkgray176},
		xlabel={\(\displaystyle k\)},
		xmin=0.65, xmax=8.35,
		xtick={1,2,3,4,5,6,7,8},
		xtick style={color=black},
		y grid style={darkgray176},
		yshift=-1.2cm,
		ylabel={\(\displaystyle \mathcal{L}_{\mathrm{sup}}(f_k)\)},
		ymin=1.18727123259211e-05, ymax=0.0076805268205903,
		ytick style={color=black}
		]
		\addplot [colorThree]
		table {%
			1 0.000368115236614702
			2 0.000365281007131766
			3 0.000363549665965959
			4 0.000362434250061925
			5 0.000361679110284518
			6 0.000361142007244325
			7 0.000360748514350362
			8 0.000360447899065211
		};
		\addplot [colorFour]
		table {%
			1 0.00101675441366445
			2 0.00100612889546227
			3 0.00099862478579906
			4 0.000993174134754626
			5 0.000989124193781512
			6 0.00098601585571626
			7 0.000983553427244453
			8 0.000981609656930568
		};
		\addplot [colorFive]
		table {%
			1 0.00171821686296893
			2 0.00170502242215109
			3 0.00169106485956835
			4 0.00167990570083742
			5 0.00167128446607682
			6 0.00166448881026854
			7 0.00165904718919217
			8 0.00165461759569827
		};
		\addplot [colorSix]
		table {%
			1 0.00284193236231218
			2 0.00282673950085789
			3 0.00280707250624367
			4 0.0027909908318385
			5 0.00277823371075614
			6 0.00276812844885758
			7 0.00275990484031884
			8 0.00275317580818802
		};
		\addplot [colorSeven]
		table {%
			1 0.00282298595158507
			2 0.00282150096334136
			3 0.00282005935283065
			4 0.00281892361635751
			5 0.00281802260861539
			6 0.00281731073816288
			7 0.0028167211605925
			8 0.00281623660588264
		};
		\addplot [colorEight]
		table {%
			1 0.00531397557672074
			2 0.00526290922573675
			3 0.00521088916731194
			4 0.00516984381723941
			5 0.00513781111415951
			6 0.00511277519621419
			7 0.00509271804542137
			8 0.00507643745873421
		};
		\addplot [colorNine]
		table {%
			1 0.00731096793231807
			2 0.00732212191513116
			3 0.00732675030670518
			4 0.00732916324559759
			5 0.00733059393496949
			6 0.00733127085126441
			7 0.00733165982155393
			8 0.00733195163385101
		};
		
		\nextgroupplot[
		width=0.4\textwidth,
		height=0.3\textwidth,
		tick align=outside,
		tick pos=left,
		title={\large \(\displaystyle \alpha=4\)},
		x grid style={darkgray176},
		xlabel={\(\displaystyle k\)},
		xmin=0.65, xmax=8.35,
		xtick={1,2,3,4,5,6,7,8},
		xtick style={color=black},
		y grid style={darkgray176},
		yshift=-1.2cm,
		ylabel={\(\displaystyle \mathcal{L}_{\mathrm{sup}}(f_k)\)},
		ymin=7.39616859733431e-05, ymax=0.00632399561639261,
		ytick style={color=black}
		]
		\addplot [colorThree]
		table {%
			1 0.000364700774666562
			2 0.000362170967386161
			3 0.000360666848197987
			4 0.000359711906757896
			5 0.000359075489958903
			6 0.000358630136316954
			7 0.000358302243236448
			8 0.000358054137356037
		};
		\addplot [colorFour]
		table {%
			1 0.000674582584099435
			2 0.000667144465125756
			3 0.000661719521167213
			4 0.000657712573031242
			5 0.000654665703293399
			6 0.000652290556523762
			7 0.000650393404609916
			8 0.000648874340237777
		};
		\addplot [colorFive]
		table {%
			1 0.000941584643200546
			2 0.000930778040961511
			3 0.000923210824054931
			4 0.000917787817333813
			5 0.000913760657119088
			6 0.000910707092267839
			7 0.000908289568202507
			8 0.000906362407995306
		};
		\addplot [colorSix]
		table {%
			1 0.00218935641617859
			2 0.00218539101761342
			3 0.00218230782885251
			4 0.00217995891286782
			5 0.00217809389592071
			6 0.00217660520032058
			7 0.00217538617028582
			8 0.00217438410563819
		};
		\addplot [colorSeven]
		table {%
			1 0.00364530681945986
			2 0.00359256517777152
			3 0.00355121304292909
			4 0.00352123914144473
			5 0.00349915678841919
			6 0.00348246584566021
			7 0.00346931010605362
			8 0.00345882302338303
		};
		\addplot [colorEight]
		table {%
			1 0.00603990316500992
			2 0.0059304556166394
			3 0.00585824756140518
			4 0.00580804695261791
			5 0.00577170242107844
			6 0.00574474357488625
			7 0.00572346304032526
			8 0.0057067031326289
		};
		\addplot [colorNine]
		table {%
			1 0.00388654006192454
			2 0.00388817631930749
			3 0.0038870777756214
			4 0.00388599448806177
			5 0.00388473594897876
			6 0.00388355323203811
			7 0.00388259655529097
			8 0.00388179653801248
		};
	\end{groupplot}
	
\end{tikzpicture}
	\caption{The figure shows how the supervised loss of the NCE optimal representation varies with $k$ for different multiclass settings corresponding to $C \in \{3,4,\ldots, 9\}$. The four plots correspond to four different parameter settings of the class weights obtained via sampling from a Dirichlet prior with parameter $\alpha$.}
	\label{fig:simulated}
\end{figure*}


\section{Experiments with CIFAR datasets}\label{sec:cifar-experiments}
CIFAR-10 and CIFAR-100 are two well-known image classification benchmark datasets. They are both balanced and contain examples from 10 (100) classes, provide 5000 (500) train examples per class and 1000 (100) test examples per class respectively \cite{krizhevsky2009learning}. We perform experiments of contrastive loss minimization on CIFAR-10 and CIFAR-100 datasets to test to what extent the simplex ETF structure manifests at the end of training. We also measure the downstream classification performance of the learnt classifier on the respective test sets. 

\textbf{Experimental Setting.} We train a ResNet-18/50 architecture with a projection head as our encoder (similar to \citep{chen2020simple}). We use the logistic loss for training and we train for 400 epochs.
We modify the positive pair generation process to match our theoretical setting. Two randomly sampled images with the same label now form a positive pair. We do not apply any other perturbations such as cropping, blurring etc. This allows us to study to what extent our theoretical predictions manifest on complex real data.
Once the encoder is trained, we then train a linear layer for standard downstream classification task of respective dataset.
To measure proximity to the Simplex ETF structure, we record two metrics. (i) First, for each class $c$, we record the mean intraclass variance among representations belonging to $c$ (henceforth referred to as $\IntraVar_c$) and (ii) second, we record the cosine similarities between the mean representation vectors of different classes $c_1, c_2$ (referred to as $\CS(c_1, c_2)$) for all $c_1 \ne c_2$. Formally $\IntraVar_c$ is defined as
\begin{align}
	\IntraVar_c = \sum_{i=1}^{n_c} \frac{1}{n_c}\|r_i - \bar{r} \|_2^2
\end{align}
where $r_i$ is the representation corresponding to the $i^{\text{th}}$ example from class $c$ and $\bar{r} = \sum_{i=1}^{n_c} r_i/n_c$. To visually interpret a particular value of $\IntraVar_c = \alpha$, we use the rough approximation that the angle made by a random representation vector of class $c$ with its class mean is $\arccos((2-\alpha)/2)$. More details on the setting are presented in \cref{apx:cifar-experiments}.

\paragraph{Results on CIFAR-100.}
On CIFAR-100 our model reaches a downstream classification accuracy of $64.76\%$. The expected value of $\CS(c_1, c_2)$ for any $c_1 \ne c_2$ in this setting will be $-1/(C-1) = -1/99 \approxeq 0.01$. We observe a mean $\CS$ value of $0.003 \pm 0.04$ (note that $\CS$ can vary from $[-1, 1]$). The average $\IntraVar_c$ across all 100 classes is $0.52$ which implies that a random representation from class $c$ makes an angle $\approx 42^\circ$. Although not a perfect simplex ETF structure due to the relatively high intra-class variance, this still shows that the inter-class cosine similarities are remarkably close to what is to be expected.

\paragraph{Results on CIFAR-10.} On CIFAR-10 we perform a more extensive set of experiments scaling the values of the number of negative samples $k$. We also sub-sample CIFAR-10 to contain fewer than 10 classes and investigate how the structure of the resulting representations change.
In our setup, we observe a steady increase in the downstream performance all the way from $k=1$ up to $k=512$ (\cref{fig:eval_acc}).
\cref{fig:scaling_c} shows how the average $\CS$ and Intra-Variance$_c$ values change with the total number of classes $C$. The average $\CS$ value in particular is strongly in line with what is predicted by our theory.

\begin{figure}[!htb]   
	\centering
	\OnlyICML{\includegraphics[width=.8\linewidth]{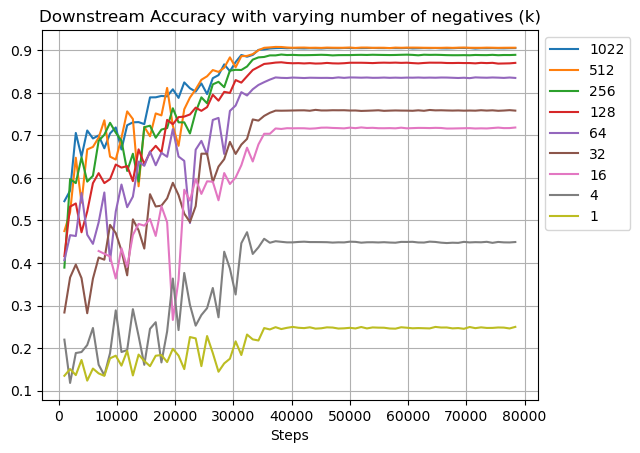}}
	\OnlyArXiv{\includegraphics[width=.45\linewidth]{\imgPath/eval_acc.png}}
	
	\caption{Downstream classification accuracy of contrastively learnt representations on CIFAR-10 improves with increasing the number of negative examples $k$.}
	\label{fig:eval_acc}
\end{figure}

\begin{figure*}[htb]   
	\centering
	\minipage{0.4\textwidth}
	\includegraphics[width=\linewidth]{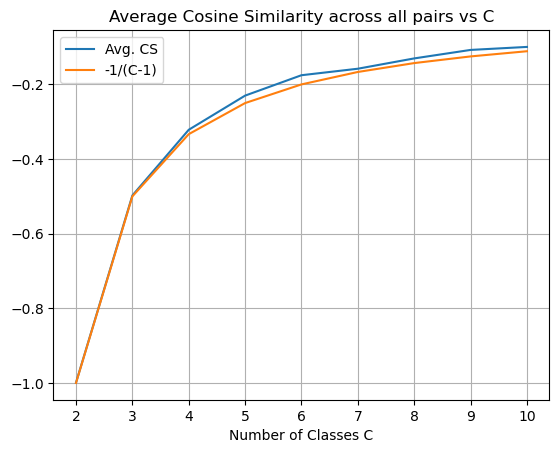}
	\endminipage 
	\minipage{0.4\textwidth}
	\includegraphics[width=\linewidth]{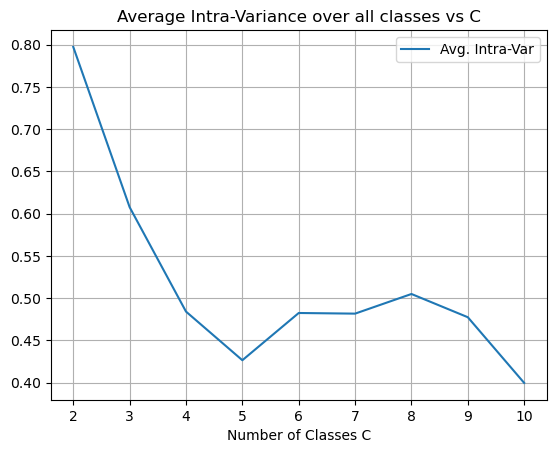}
	\endminipage \hspace{5mm}
	\caption{The change in the average inter-classes cosine similarities and average intraclass variances of the representations is plotted for different values of the number of classes $C$ while the number of negative samples $k$ is fixed at 1022. Experiments were done by sub-sampling CIFAR-10 to have fewer classes. The orange line in the graph on the left plots what our theory predicts.}
	\label{fig:scaling_c}
\end{figure*}

\section{Comparing our Results with Prior Work}\label{sec:discussion}
Here we discuss our results and their implications in context of prior work studying the impact of negative samples in contrastive learning.

\paragraph{Comparing with prior empirical observations.} Some empirical phenomena suggest that while more negative samples help initially, when increased beyond a certain point, they can start to hurt \citep{ash21investigating,mitrovic2020less}. On the other hand, as we observe in our paper, if we analyze the representation minimizing the population NCE loss, larger number of negative samples continues to help improve the downstream supervised learning task, or at least does not hurt it. How do we then reconcile these two observations? 

There are three important ways in which our analysis simplifies what happens in the real world. The first is by assuming that our optimizer (typically a variant of stochastic gradient descent) has reached a global minimum. This might not always hold in practice.
In particular, the optimizer might find it harder to reach a global minima with increasing values of $k$.
The second aspect is that perhaps the class of deep neural nets optimized by SGD is not expressive enough to be able to perfectly satisfy our structural characterization of the minimizer. This corresponds to the setting when there is an approximation error as we minimize among a restricted class of functions. On the other hand, if the class is highly expressive, then there remains the question of finite sample generalization of the empirical NCE optimal representation.
And lastly, the positive pair generation process may be quite different to the one we considered which could lead to quite different properties of the minimizer.

Apart from these three reasons, another factor to consider is our assumption of non-overlapping latent classes which might not hold perfectly in practice leading to small inconsistencies in observed performance.
Moreover, the story is not very clear on the empirical side as well. In works which report experiments showing that a large number of negative samples hurt performance beyond a point \citep{ash21investigating,saunshi2019theoretical}, the degradation in performance is quite small ($1-2\%$ drop in accuracy) and it is unclear if it cannot be attributed to noise introduced during training. \citet{mitrovic2020less} also report a small amount of degradation at higher $k$ but their positive pair generation process is different from what we consider in our setting. In addition, experiments performed in the works of \citet{nozawa2021understanding,bao2021sharp} seem to indicate that increasing $k$ does not hurt downstream classification performance.

\paragraph{Comparing with prior theoretical observations.} 
The theoretical work closest to ours is that of \cite{saunshi2019theoretical,ash21investigating} who work with a similar framework as ours. The main results in these works are upper bounds on the supervised loss of any representation in terms of its NCE loss. That is, they show that, for any representation function $f$,
\begin{align}
    \Lsup(f) \le \alpha(k,\rho) \cdot \inparen{\LNCEk(f) - \tau(k)}\,.
\end{align}
where in the setting of uniform distribution $\rho$ over latent classes, $\alpha(k, \rho) \approxeq \frac{4\log C (C-1) \eta^k}{k}$ for $\eta = 1 + 1/(C-1) > 1$ and hence for large $k$, $\alpha(k, \rho)$ grows exponentially with $k$, and $\tau(k) := 1 - (1 - \frac{1}{C})^k < 1$. However, this is just an upper bound on the downstream performance. 
In contrast, our result shows that for the minimizer $f^*$ of the population NCE loss, $\Lsup(f^*)$ does not increase with $k$ under \cref{ass:uniform} and we give supporting evidence through simulations that this monotonic behavior persists even without the uniformity assumption. Moreover in \Cref{thm:improved-ash} (in \Cref{apx:ash21improvement}), we improve the above result in the case of logistic loss by showing that for \emph{any} $f$, 
$$\Lsup(f) \le \beta(k,\rho) \cdot \LNCEk(f)$$
where $\beta(k,\rho)$ is in fact non-increasing in $k$ and becomes a constant for large enough $k$. We emphasize that while our result does not have the $\tau(k)$ term, it doesn't affect the asymptotic behavior with respect to $k$, since $\tau(k) \le 1$ whereas $\LNCEk(f)$ increases with $k$. However, we note again that $\Lsup$ (which is a classification objective over $C$ classes) has a different scale than $\LNCEk$ (which is a classification objective over $k$ classes). In particular, $\Lsup(f)$ is at most $\log C$ for a (trivial) constant representation, whereas $\LNCEk(f)$ grows as $\sim \log k$ even for the NCE optimal representation. Hence it may not be meaningful to prove an upper bound on the downstream supervised loss of a representation directly in terms of the NCE loss, and instead study the representations that are NCE optimal (or close to one) directly as we do in this paper.

To support their proposition that a worsening of performance with increased $k$ is unavoidable, both \citet{saunshi2019theoretical, ash21investigating} present example representation functions and distributions where the representation giving better NCE loss yields a worse supervised loss. However these are not formal lower bounds on the performance of the \emph{optimal} representation that minimizes the NCE loss. Indeed, our results suggest that such a formal lower bound might not exist as the performance does not degrade with $k$ (\cref{conj:sup-loss-non-increasing}).
In addition, the examples provided in these works sometimes rely on unnormalized representations which is not what works well in practice.

After the publication of our work, we were informed about the works of \citet{nozawa2021understanding,bao2021sharp} which provide sharper theoretical bounds for the supervised loss in terms of the contrastive loss. \citet{bao2021sharp} provide upper and lower bounds on the supervised loss of any representation in terms of the contrastive loss obtained on that representation. In the regime of large $k$, these upper and lower bounds differ by a constant which is independent of $C,k$. We compare and contrast these results with ours across two settings: (i) uniform class distribution: in this setting, our structural characterization in \Cref{thm:simplex-etf} provides an exact understanding of the downstream loss of the NCE optimal representation. On the other hand, the bounds provided in \citet{nozawa2021understanding,bao2021sharp} (which applies to any representation) do not in particular imply that the performance of the NCE optimal representation on the downstream task does not degrade at all with increasing $k$; (ii) non-uniform class distributions: in this setting, we are not able to show that the downstream performance does not degrade with increasing $k$. The bounds in \citet{bao2021sharp} continues to provide a sharp characterization of the supervised loss of any representation, but as far as we know, \Cref{conj:sup-loss-non-increasing} remains open. At a broader level, the message in the works of \cite{nozawa2021understanding,bao2021sharp} is aligned with the message in our work that more negative samples in contrastive learning do not necessarily hurt downstream performance.

To conclude, we analyzed normalized representations and see a much nicer structure emerge in the population NCE optimal representation. Our investigation suggests that the ``collision-coverage'' tradeoff is not sufficient on its own for explaining the non-monotonic behavior in the downstream performance as a function of the number of negative samples that is observed in practice.

\section*{Acknowledgements}
We would like to thank Simon Kornblith for help with running the experiments and related discussions, Sankeerth Rao Karingula for discussions about optimizing over correlation matrices and Kento Nozawa for bringing to our attention the papers of \citet{nozawa2021understanding,bao2021sharp}.

\newpage
\bibliographystyle{icml2022}
\bibliography{refs}

\newpage
\appendix
\onecolumn

\section{Proof of Structural Results}\label{apx:structural-proofs}

\subsection{Non-overlapping Latent Classes}\label{apx:latent-ind-rep-proofs}

\obsSubadditive*
\begin{proof}
We consider the case of $\beta=1$. The case of general $\beta$ follows immediately.

For $\logloss(v) := \log(1 + \sum_i \exp(-v_i))$, using concavity of $\log$ (Jensen's inequality), and denoting $T := 1 + \sum_{j \notin S} \exp(-v_j)$, we have
\begin{align*}
\textstyle\logloss(v)
~=~ \log\inparen{T + \sum_{i \in S} \exp(-v_i)}
~\ge~ \frac{1}{|S|} \sum_{i \in S} \log\inparen{T + |S| \cdot \exp(-v_i)}
~=~ \frac{1}{|S|}\sum_{i \in S} \logloss(v^{S \gets i})
\end{align*}
For $\hingeloss(v) := \max\set{0, 1 + \max\set{-v_i}}$, using the simple property that $\max\set{a_1, \ldots, a_k} \ge (\sum a_i) / k$, we have
\begin{align*}
\ell(v)
&\textstyle~=~ \max\set{0, 1 + \max_{i\in S}\set{-v_i}, 1 + \max_{j\notin S}\set{-v_j}}\\
&\textstyle~\ge~ \frac{1}{|S|} \sum_{i \in S} \max\set{0, 1 - v_i, 1 + \max_{j \notin S} \set{-v_j}}
~=~ \frac{1}{|S|} \sum_{i \in S} \hingeloss(v^{S \gets i})\qedhere
\end{align*}
\end{proof}

\thmLatentIndep*
\begin{proof}
We show the existence of $\wtilde{f}$ in an existential manner. For a fixed latent class $c^* \in \calC$, sample $x^* \sim \calD_{c^*}$ and define a representation $f_{x^*} : \calX\to \bbS^{d-1}$ as
\begin{align}
	f_{x^*}(x) &~:=~ \begin{cases}
		f(x^*) & \text{if } c(x) = c^*\\
		f(x) & \text{if } c(x) \ne c^*
	\end{cases}
\end{align}
We will show that $\Ex_{x^* \sim \calD_c} \LNCEk(f_{x^*}) \le \LNCEk(f)$. This implies the existence of an $x^*$ in support of $\calD_{c^*}$ such that $\LNCEk(f_{x^*}) \le \LNCEk(f)$.

We can rewrite the NCE loss as
\begin{align}
\LNCEk(f) &~=~ \Ex_{c, c_{1:k} \sim \rho} \ \underbrace{\Ex_{x, x^+ \sim \calD_c} \ \Ex_{\tuple{x_i^- \sim \calD_{c_i}}} \ \ell\inparen{\tuple{f(x)^\top (f(x^+) - f(x_i^-))}_{i=1}^k}}_{=:~\calL_{c,c_{1:k}}(f)}
\end{align}
We will show that for each $c, c_{1:k}$, it holds that $\Ex_{x^* \sim \calD_{c^*}} \calL_{c,c_{1:k}}(f_{x^*}) \le \calL_{c,c_{1:k}}(f)$.

\begin{description}
\item [{\underline{\em \boldmath Case $c = c^*$:}}] For ease of notation let $c_1, \ldots, c_q = c^*$ and $c_{q+1}, \ldots, c_k \ne c^*$. We have
\begin{align}
\calL_{c, c_{1:k}}(f) &~=~ \Ex_{x, x_{q+1:k}^-} \Ex_{x^+, x_{1:q}^-} \ell\inparen{\tuple{f(x)^\top (f(x^+) - f(x_i^-))}_{i=1}^k}\label{eq:c=c*-step-1}\\
&~\ge~ \Ex_{x, x_{q+1:k}^-} \Ex_{x^+, x_{1:q}^-} \ell\inparen{\tuple{f(x)^\top (f(x^+) - f(x_i^-))}_{i=1}^q \circ \tuple{1 - f(x)^\top f(x_i^-))}_{i=q+1}^k}\label{eq:c=c*-step-2}\\
&~\ge~ \Ex_{x, x_{q+1:k}^-} \ell\inparen{\tuple{\Ex_{x^+, x_{1:q}^-} f(x)^\top (f(x^+) - f(x_i^-))}_{i=1}^q \circ \tuple{1 - f(x)^\top f(x_i^-))}_{i=q+1}^k}\label{eq:c=c*-step-3}\\ 
&~=~ \Ex_{x, x_{q+1:k}^-} \ell\inparen{\tuple{0}_{i=1}^q \circ \tuple{1 - f(x)^\top f(x_i^-))}_{i=q+1}^k} \label{eq:c=c*-step-4}
\end{align}
where, \eqref{eq:c=c*-step-2} holds because $f(x)^\top f(x^+) \le 1$ and $\ell(v)$ is non-increasing in each $v_i$, \eqref{eq:c=c*-step-3} holds due to convexity of $\ell$ (Jensen's inequality), and \eqref{eq:c=c*-step-4} holds because $\Ex_{x^+, x_i^- \sim \calD_{c^*}} [f(x^+) - f(x_i^-)] = 0$. Finally, we observe that the quantity in \eqref{eq:c=c*-step-4} is precisely $\Ex_{x^* \sim \calD_{c^*}} \calL_{c, c_{1:k}}(f_{x^*})$, since
\begin{align}
\Ex_{x^* \sim \calD_{c^*}} \calL_{c, c_{1:k}}(f_{x^*}) &~=~ \Ex_{x^* \sim \calD_{c^*}} \Ex_{x, x_{q+1:k}^-} \Ex_{x^+, x_{1:q}^-} \ell\inparen{\tuple{f_{x^*}(x)^\top (f_{x^*}(x^+) - f_{x^*}(x_i^-))}_{i=1}^k}\\
&~=~ \Ex_{x^* \sim \calD_{c^*}} \Ex_{x, x_{q+1:k}^-} \ell\inparen{\tuple{0}_{i=1}^q \circ \tuple{1 - f(x^*)^\top f(x_i^-))}_{i=q+1}^k}
\end{align}
which is same as the quantity in \eqref{eq:c=c*-step-4} up to renaming $x$ by $x^*$ (note: both $x, x^* \sim \calD_{c^*}$ here). For a strictly convex loss, note that our application of Jensen's inequality is tight only when $\Pr[f(x)^\top (f(x^+) - f(x_i^-)) = 0] = 1$, which is the case only if $\Pr_{x, x' \sim \calD_{c^*}}[f(x) = f(x')] = 1$.\info{This part is a bit hand-wavy.}

\item [{\underline{\em \boldmath Case $c \ne c^*$:}}] Again, for ease of notation let $c_1, \ldots, c_q = c^*$ and $c_{q+1}, \ldots, c_k \ne c^*$. We have
\begin{align}
	\calL_{c, c_{1:k}}(f) &~=~ \Ex_{x, x^+ x_{q+1:k}^-} \Ex_{x_{1:q}^- \sim \calD_{c^*}} \ell\inparen{\tuple{f(x)^\top (f(x^+) - f(x_i^-))}_{i=1}^k}\label{eq:cnec*-step-1}\\
	\Ex_{x^* \sim \calD_{c^*}} \calL_{c, c_{1:k}}(f_{x^*}) &~=~ \Ex_{x, x^+ x_{q+1:k}^-} \Ex_{x^* \sim \calD_{c^*}} \ell\inparen{\tuple{f(x)^\top (f(x^+) - f(x^*))}_{i=1}^k}\label{eq:cnec*-step-2}
\end{align}
We will show that for all $x, x^+, x_{q+1:k}^-$, for any loss $\ell$ satisfying \cref{def:sub-additive-loss} it holds that
\[
\Ex_{x_{1:q}^- \sim \calD_{c^*}} \ell\inparen{\tuple{f(x)^\top (f(x^+) - f(x_i^-))}_{i=1}^k} ~\ge~ \Ex_{x^* \sim \calD_{c^*}} \ell\inparen{\tuple{f(x)^\top (f(x^+) - f(x^*))}_{i=1}^k}
\]
For ease of notation, consider a random variable $Z$ that is distributed as $f(x)^{\top} (f(x^+) - f(x^-))$  for $x^- \sim \calD_{c^*}$ (for fixed $x$ and $x^+$).
For any loss $\ell$ satisfying \cref{def:sub-additive-loss}, we have that
\begin{align*}
&\Ex_{Z_{1:q}} \ \ell\inparen{\tuple{Z_i}_{i=1}^q \circ \tuple{f(x)^\top (f(x^+) - f(x_i^-))}_{i=q+1}^k}\\
&~\ge~\Ex_{Z_{1:q}} \insquare{\frac{1}{q} \sum_{j=1}^k \ell\inparen{\tuple{Z_j}_{i=1}^q \circ \tuple{f(x)^\top (f(x^+) - f(x_i^-))}_{i=q+1}^k}}\\
&~=~ \Ex_{Z} \ \ell\inparen{\tuple{Z}_{i=1}^q \circ \tuple{f(x)^\top (f(x^+) - f(x_i^-))}_{i=q+1}^k}
\end{align*}
\end{description}

\noindent Thus, we have $\Ex_{x^* \sim \calD_{c^*}} \LNCEk(f_{x^*}) \le \LNCEk(f)$, thereby completing the proof. The existence of $\wtilde{f}$ follows by iteratively repeating this argument for each latent class $c^* \in \calC$.

Moreover, if $\ell$ is strictly convex loss and $f$ is not latent-indistinguishable, then we have a strict inequality in the case of $c = c^*$ for at least one $c^*$ in this iterative process. Thus, in the case of a strictly convex loss $\ell$, the only NCE optimal representations are almost latent-indistinguishable.
\end{proof}

\subsection{Uniform distribution over classes} \label{apx:proof-equiangular-proof}

\claimEquiangular*
\begin{proof}
We prove the statement for latent-indistinguishable representations $f$. The proof for {\em almost} latent-indistinguishable representations follows similarly.

Let $u_c$ denote the (common) representation for all $x$ in latent class $c$, namely $f(x) = u_{c(x)}$. Observe that $\|\sum_{c \in \calC} u_c\|_2^2 = C + \sum_{c \ne c'} u_c^\top u_{c'} \ge 0$ and hence $\Ex_{c, c' \sim \rho} [u_c^\top u_{c'} \mid c \ne c'] \ge -1/(C-1)$ (under \Cref{ass:uniform} that $\rho$ is uniform over $\calC$). Let $\wtilde{f}$ be an {\em equiangular} representation given as $\wtilde{f}(x) = \wtilde{u}_{c(x)}$ satisfying
\[
\wtilde{u}_c^\top \wtilde{u}_{c'} = \begin{cases} 1 & \text{if } c = c'\\ \Ex_{c, c' \sim \rho}[u_c^\top u_{c'} \mid c \ne c'] & \text{if } c \ne c'\end{cases}\,.
\]
We will show that $\LNCEk(\wtilde{f}) \le \LNCEk(f)$ for any convex loss $\ell$. Observe that since $f$ and $\wtilde{f}$ are latent-indistinguishable, we can write $\LNCEk(f)$ in the following simplified form
\[\LNCEk(f) ~=~ \Ex\limits_{c,c_1, \ldots, c_k \sim \rho} \ \insquare{\ell(1 - u_c^\top u_{c_1}, 1 - u_c^\top u_{c_2}, \ldots, 1 - u_c^\top u_{c_k})}\]

For any permutation $\pi$ over $\calC$, let $f_{\pi}$ denote the representation obtained by permuting the representations of the latent classes, namely, $f_\pi(x) ~:=~ u_{\pi(c(x))}$. We have the following
\begin{align*}
\LNCEk(f)
&~=~ \Ex_\pi \insquare{\LNCEk(f_\pi)} \\
&~=~ \Ex\limits_{c,c_1, \ldots, c_k \sim \rho} \Ex_\pi \ \insquare{\ell(1 - u_{\pi(c)}^\top u_{\pi(c_1)}, 1 - u_{\pi(c)}^\top u_{\pi(c_2)}, \ldots, 1 - u_{\pi(c)}^\top u_{\pi(c_k)})}\\
&~\ge~ \Ex\limits_{c,c_1, \ldots, c_k \sim \rho} \ \insquare{\ell(1 - \Ex_\pi [u_{\pi(c)}^\top u_{\pi(c_1)}], 1 - \Ex[u_{\pi(c)}^\top u_{\pi(c_2)}], \ldots, 1 - \Ex_\pi[u_{\pi(c)}^\top u_{\pi(c_k)}])}\\
&~=~ \Ex\limits_{c,c_1, \ldots, c_k \sim \rho} \ \insquare{\ell(1 - \wtilde{u}_c^\top \wtilde{u}_{c_1}, 1 - \wtilde{u}_c^\top \wtilde{u}_{c_2}, \ldots, 1 - \wtilde{u}_c^\top \wtilde{u}_{c_k})}\\
&~=~ \LNCEk(\wtilde{f})
\end{align*}
where the third step follows from Jensen's inequality, using convexity of $\ell$.

Finally, for any non-increasing loss $\ell$, it is easy to see that among all equiangular representations, the representation minimizing $\LNCEk(\cdot)$ is a Simplex ETF. Moreover, when $\ell$ is strictly convex our application of Jensen's inequality is strict unless $u_c^\top u_{c'}$ is the same for all $c \ne c'$, in other words, the representation is equiangular.
\end{proof}

\Experimental{
\subsection{Optimal Representation for Downstream Supervised Loss}\label{apx:supervised-optimal-proof}

Using similar ideas as in the proof of \cref{thm:latent-ind-rep} and \cref{claim:equiangular}, we show that the Simplex ETF is also the optimal representation minimizing the supervised learning loss.

\thmSupervisedOptimal*

It is easy to see that \cref{thm:supervised-optimal} follows by combining the following two claims.

\begin{claim}\label{claim:supervised-optimal-latent-indistinguishable}
Under \Cref{ass:non-overlap}, for any loss $\ell$, it holds for all representations $f : \calX \to \bbS^{d-1}$, that there exists a latent-indistinguishable representation $\wtilde{f} : \calX \to \bbS^{d-1}$ such that $\Lsup(\wtilde{f}) \le \Lsup(f)$. Moreover, if $\ell$ is strictly convex (e.g. logistic loss), then the inequality above is strict, unless $f$ is almost latent-indistinguishable.
\end{claim}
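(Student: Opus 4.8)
\emph{Proof strategy.}
The plan is to follow the template of \cref{thm:latent-ind-rep}, except that the argument is more transparent: unlike the NCE loss, the supervised loss never couples two points of the same latent class through a positive pair, so in $\Lsup(f,\tuple{w_c}_{c\in\calC})$ each input $x$ contributes only through the term of its own class $c(x)$. Consequently, one may ``collapse'' $f$ on each latent class independently, against a fixed (near-)optimal linear predictor.

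\emph{The inequality $\Lsup(\wtilde f)\le\Lsup(f)$.}
First note that since $\|f(x)\|_2=1$ and $\|w_c\|_2\le 1$, the arguments handed to $\ell$ always lie in $[-2,2]^{C-1}$; hence (for the continuous losses of interest, or after a routine $\eps$-approximation together with compactness of $(\bbS^{d-1})^{C}$, which is where the collapsed representation lives) I may assume the infimum in \cref{def:sup-loss} is attained, and fix an optimal predictor $\tuple{w_c^\star}_{c\in\calC}$ for $f$. Using \cref{ass:non-overlap} I would write $\Lsup(f,\tuple{w_c^\star}_{c\in\calC})=\sum_{c\in\calC}\rho(c)\,\Ex_{x\sim\calD_c}\insquare{\ell\inparen{\tuple{f(x)^\top(w_c^\star-w_{c'}^\star)}_{c'\ne c}}}$. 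For each $c$ the inner term is an average over $x\sim\calD_c$, so the set of $x$ with $\ell\inparen{\tuple{f(x)^\top(w_c^\star-w_{c'}^\star)}_{c'\ne c}}\le\Ex_{x\sim\calD_c}\ell\inparen{\tuple{f(x)^\top(w_c^\star-w_{c'}^\star)}_{c'\ne c}}$ has positive $\calD_c$-measure; fix such a point $x_c^\star$. Setting $\wtilde f(x):=f(x_{c(x)}^\star)$ yields a latent-indistinguishable representation into $\bbS^{d-1}$, and
\[ \Lsup(\wtilde f)~\le~\Lsup(\wtilde f,\tuple{w_c^\star}_{c\in\calC})~=~\sum_{c\in\calC}\rho(c)\,\ell\inparen{\tuple{f(x_c^\star)^\top(w_c^\star-w_{c'}^\star)}_{c'\ne c}}~\le~\Lsup(f,\tuple{w_c^\star}_{c\in\calC})~=~\Lsup(f). \]
No convexity of $\ell$ is used here, consistent with the claim holding for any loss.

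\emph{Strictness.} By the inequality just proved, the minimum of $\Lsup$ over \emph{all} representations equals its minimum over latent-indistinguishable ones; so it suffices to show that when $\ell$ is strictly convex, a global minimizer $f$ of $\Lsup$ must be almost latent-indistinguishable. Suppose not: there is a class $c^\star$ with $\rho(c^\star)>0$ on which $f$ is not $\calD_{c^\star}$-a.s.\ constant; fix an optimal predictor $\tuple{w_c^\star}_{c\in\calC}$. If the real random variable $x\mapsto\ell\inparen{\tuple{f(x)^\top(w_{c^\star}^\star-w_{c'}^\star)}_{c'\ne c^\star}}$, $x\sim\calD_{c^\star}$, is not $\calD_{c^\star}$-a.s.\ constant, then in the construction above $x_{c^\star}^\star$ can be chosen \emph{strictly} below the mean, so the collapsed $\wtilde f$ has $\Lsup(\wtilde f)<\Lsup(f)$, contradicting minimality. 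The genuinely delicate case — and the step I expect to be the main obstacle — is when this random variable \emph{is} a.s.\ constant even though $f$ varies on $c^\star$ (this can happen, e.g.\ whenever all $w_c^\star$ coincide): then no gain is possible at the \emph{same} predictor and one must re-optimize. To close it I would pick two distinct unit vectors $a_1\ne a_2$ in the support of $f$ restricted to $\calD_{c^\star}$, perturb the predictor infinitesimally so that some functional $w_{c^\star}^\star-w_{c'}^\star$ distinguishes $a_1$ from $a_2$, and then collapse class $c^\star$ onto a below-average representative; strict convexity of $\ell$ along the now-``visible'' direction $a_1-a_2$ should force a strict decrease. The careful part is bounding the first-order effect of the predictor perturbation on the other class terms and handling the boundary constraints $\|w_c\|_2\le 1$; a natural way to organize this is to first reduce (using the first part of the claim) to comparing $f$ against latent-indistinguishable representations each equipped with its own optimal predictor, and run the perturbation entirely within that finite-dimensional family.
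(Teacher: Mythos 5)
Your proof of the inequality $\Lsup(\wtilde f)\le\Lsup(f)$ is correct and has the same high-level structure as the paper's: fix an optimal predictor $\tuple{w_c^\star}_{c\in\calC}$, collapse $f$ class-by-class, and use that each $x$ contributes only to the term of its own class. The difference is in how the collapse point is chosen. You pick a below-average representative $x_c^\star$ in the support of $\calD_c$ and set $\wtilde f\equiv f(x_c^\star)$ on class $c$; the paper instead sets $\wtilde f\equiv u_c$ on class $c$ where $u_c\in\argmin_{u\in\bbS^{d-1}}\ell\inparen{\tuple{u^\top(w_c^\star-w_{c'}^\star)}_{c'\ne c}}$, i.e.\ the global minimizer over the whole sphere rather than merely a below-average value of $f$ on the class. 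Both give the inequality, but the paper's choice is strictly more powerful, and this matters in the next step.

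For strictness, the paper's construction largely avoids the case you flag as ``genuinely delicate.'' With $u_c$ being the argmin over all of $\bbS^{d-1}$, tightness of $\sum_c\rho(c)\,\ell(\tuple{u_c^\top(w_c^\star-w_{c'}^\star)}_{c'\ne c})\le\Lsup(f)$ forces $\ell(\tuple{f(x)^\top(w_c^\star-w_{c'}^\star)}_{c'\ne c})$ to equal the sphere-wide minimum for $\calD_c$-a.e.\ $x$ — and when $\ell$ is strictly convex and the vectors $\set{w_c^\star-w_{c'}^\star}_{c'\ne c}$ are not all zero, the argmin on the sphere is (generically) a single point, so $f$ must be a.s.\ constant on class $c$. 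In your variant the representative $x_c^\star$ is only below the class average of $f$, so when the loss is a.s.\ constant over the class you get no information — this is precisely why you have to fall back to a reduction to global minimizers and a perturbation of the predictor. That reduction is also logically weaker than what is claimed: the statement is about arbitrary $f$, not just global minimizers, and although you can patch it by noting that the minimum over all representations equals the minimum over latent-indistinguishable ones, this rests on the infimum being attained, which you only gesture at. On the other hand, your worry is not unfounded: the paper's one-line appeal to ``unique choice of $u_c$'' quietly fails when the vectors $\set{w_c^\star-w_{c'}^\star}_{c'\ne c}$ are all zero (or, more subtly, when they come in antipodal pairs that cancel, leaving the argmin a whole subsphere); those degenerate cases are exactly where something like your perturbation argument, or an explicit re-optimization of the predictor for $\wtilde f$, would still be needed. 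In short: switch to the paper's ``argmin over the sphere'' collapse to dispose of the generic case cleanly, and keep your perturbation idea in reserve for the degenerate predictors.
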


\begin{claim}\label{claim:supervised-optimal-equiangular}
Under \cref{ass:non-overlap,ass:uniform}, for any convex, non-increasing loss $\ell$, it holds for all (almost) latent-indistinguishable representations $f : \calX \to \bbS^{d-1}$, that $\Lsup(\wtilde{f}) \le \Lsup(f)$ for any Simplex ETF $\wtilde{f} : \calX \to \bbS^{d-1}$.
Moreover if $\ell$ is strictly convex (e.g. logistic loss), then equality holds only if $f$ is an (almost) Simplex ETF representation.
\end{claim}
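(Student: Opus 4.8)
The plan is to collapse everything to a one-dimensional comparison against the constant tuple $\tfrac{C}{C-1}\cdot\mathbf{1}$. Fix a latent-indistinguishable $f$, write $f(x)=u_{c(x)}$ with $u_c\in\bbS^{d-1}$, and fix any predictors $\tuple{w_c}_{c\in\calC}$ with $\|w_c\|_2\le 1$. Since $\rho$ is uniform, \cref{def:sup-loss} gives $\Lsup(f,\tuple{w_c}_{c\in\calC}) = \tfrac1C\sum_{c\in\calC}\ell\inparen{\tuple{u_c^\top(w_c-w_{c'})}_{c'\ne c}}$. First I would replace each inner tuple by a constant tuple: using convexity of $\ell$ together with the fact that $\ell(v)\ge\ell(\bar v\cdot\mathbf{1})$ for $\bar v$ the mean of the coordinates of $v$ (which holds for logistic and hinge losses, e.g.\ via \cref{def:sub-additive-loss} with $S$ the full coordinate set followed by convexity, or via permutation-symmetry of $\ell$), we get $\ell(\tuple{u_c^\top(w_c-w_{c'})}_{c'\ne c})\ge\ell(\bar a_c\cdot\mathbf{1})$ where $\bar a_c:=u_c^\top w_c-\tfrac{1}{C-1}\sum_{c'\ne c}u_c^\top w_{c'}$ and $\mathbf{1}$ is the all-ones vector of length $C-1$. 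Applying convexity of $\ell$ once more to average over $c$ yields $\Lsup(f,\tuple{w_c}_{c\in\calC})\ge\ell(\bar a\cdot\mathbf{1})$ with $\bar a:=\tfrac1C\sum_c\bar a_c=\tfrac{1}{C(C-1)}\sum_{c\ne c'}u_c^\top(w_c-w_{c'})$.

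The key computation is to bound $\bar a$. Writing $\bar u:=\tfrac1C\sum_c u_c$ and $\bar w:=\tfrac1C\sum_c w_c$, a short algebraic identity shows $\bar a=\tfrac{1}{C-1}\sum_c(u_c-\bar u)^\top(w_c-\bar w)$. Cauchy--Schwarz bounds this by $\tfrac{1}{C-1}\big(\sum_c\|u_c-\bar u\|_2^2\big)^{1/2}\big(\sum_c\|w_c-\bar w\|_2^2\big)^{1/2}$, and since $\sum_c\|u_c-\bar u\|_2^2=C-C\|\bar u\|_2^2\le C$ and $\sum_c\|w_c-\bar w\|_2^2=\sum_c\|w_c\|_2^2-C\|\bar w\|_2^2\le C$, we get $\bar a\le\tfrac{C}{C-1}$. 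Because $\ell$ is non-increasing in each coordinate, this gives $\Lsup(f,\tuple{w_c}_{c\in\calC})\ge\ell\inparen{\tfrac{C}{C-1}\cdot\mathbf{1}}$, and taking the infimum over $\tuple{w_c}_{c\in\calC}$ shows $\Lsup(f)\ge\ell\inparen{\tfrac{C}{C-1}\cdot\mathbf{1}}$. For any Simplex ETF $\wtilde f$ with vectors $\wtilde u_c$, plugging in $w_c=\wtilde u_c$ (valid since $\|\wtilde u_c\|_2=1$) gives $\wtilde u_c^\top(\wtilde u_c-\wtilde u_{c'})=1+\tfrac{1}{C-1}=\tfrac{C}{C-1}$ for every $c'\ne c$, so $\Lsup(\wtilde f)\le\ell\inparen{\tfrac{C}{C-1}\cdot\mathbf{1}}$; combining with the lower bound applied to $\wtilde f$ itself, $\Lsup(\wtilde f)=\ell\inparen{\tfrac{C}{C-1}\cdot\mathbf{1}}\le\Lsup(f)$. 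The almost-latent-indistinguishable case is identical, since $\Lsup(f,\cdot)$ depends on $f$ only through its values on the support of $\calD$, where $f$ agrees a.s.\ with a latent-indistinguishable representation (and ``Simplex ETF'' weakens to ``almost Simplex ETF'').

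For the uniqueness part, suppose $\ell$ is strictly convex and $\Lsup(f)=\ell\inparen{\tfrac{C}{C-1}\cdot\mathbf{1}}$; let $\tuple{w_c^*}_{c\in\calC}$ attain the infimum (it exists by compactness). Then every inequality above is an equality for $\tuple{w_c^*}_{c\in\calC}$. Equality in the first Jensen step forces, for each $c$, that $u_c^\top w_{c'}^*$ is constant over $c'\ne c$; equality in the second forces $\bar a_c$ constant over $c$; since $t\mapsto\ell(t\cdot\mathbf{1})$ is strictly decreasing, $\ell(\bar a\cdot\mathbf{1})=\ell\inparen{\tfrac{C}{C-1}\cdot\mathbf{1}}$ forces $\bar a=\tfrac{C}{C-1}$, which makes Cauchy--Schwarz tight and forces $\|\bar u\|_2=\|\bar w^*\|_2=0$ and $\|w_c^*\|_2=1$ for all $c$; together these give $w_c^*=u_c$ and $\sum_c u_c=0$. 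Plugging back, $u_c^\top u_{c'}$ is constant over $c'\ne c$ and $u_c^\top\sum_{c'}u_{c'}=0$ forces this constant to be $-1/(C-1)$ for every $c$, i.e.\ $f$ is a Simplex ETF representation (almost Simplex ETF in the ``almost'' case). I expect the main obstacle to be the algebraic identity expressing $\bar a$ as the ``centered'' correlation $\tfrac{1}{C-1}\sum_c(u_c-\bar u)^\top(w_c-\bar w)$ together with the Cauchy--Schwarz bound — that is the step where the ETF's maximality really enters — and, secondarily, the bookkeeping of all equality cases for the uniqueness statement.
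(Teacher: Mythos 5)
Your proof is correct, and it takes a genuinely different route from the paper's. The paper's argument mirrors the proof of \cref{claim:equiangular}: it averages the tuples $\tuple{u_c}$ and $\tuple{w_c}$ over a uniformly random permutation $\pi$ of $\calC$ and invokes Jensen to show that the ``symmetrized'' (equiangular) pair only improves $\Lsup$, then argues that among equiangular configurations the Simplex ETF with $w_c = u_c$ is best. In contrast, you collapse the loss all the way down to a scalar: two applications of Jensen reduce $\Lsup(f,\tuple{w_c})$ to $\ell(\bar a \cdot\mathbf{1})$, the identity $\bar a=\tfrac{1}{C-1}\sum_c(u_c-\bar u)^\top(w_c-\bar w)$ exposes it as a centered inner product, and Cauchy--Schwarz plus the unit-norm and $\|w_c\|_2\le 1$ constraints caps $\bar a$ at $\tfrac{C}{C-1}$. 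This buys two things the paper's sketch lacks: you get the exact value $\Lsup(\wtilde f)=\ell\bigl(\tfrac{C}{C-1}\cdot\mathbf{1}\bigr)$, and you sidestep entirely the case split on the sign of $\bigl(\sum_c u_c\bigr)^\top\bigl(\sum_c w_c\bigr)$, where the paper's proof only treats the non-negative case and its construction of $(\wtilde u_c,\wtilde w_c)$ tacitly assumes $\Ex_c[u_c^\top w_c]>0$. Your uniqueness argument is also self-contained: tightness of Cauchy--Schwarz and of the norm bounds forces $\sum_c u_c=0$, $w_c^*=u_c$, and then $u_c^\top\sum_{c'}u_{c'}=0$ pins $u_c^\top u_{c'}=-1/(C-1)$.

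One point worth stating explicitly rather than leaving parenthetical: your first Jensen step $\ell(v)\ge\ell(\bar v\cdot\mathbf{1})$ requires $\ell$ to be permutation-symmetric in its arguments (it fails for, say, $\ell(v_1,v_2)=\max(0,1-v_1)$, which is convex and non-increasing). The claim reads ``any convex, non-increasing loss,'' but the paper globally restricts $\ell$ to logistic or hinge, both symmetric, and the paper's own permutation averaging over $\calC$ secretly needs the same symmetry to reidentify the relabeled tuple with the original; so this is a shared hypothesis, not a gap peculiar to your argument, but you should say it. Also, for the uniqueness step you invoke ``$t\mapsto\ell(t\cdot\mathbf{1})$ is strictly decreasing'': this does follow from $\ell$ being strictly convex, non-increasing, and $\bbR_{\ge 0}$-valued (a strictly convex function cannot be constant on an interval, so a non-increasing one that is bounded below must be strictly decreasing), but it deserves a sentence since strict convexity alone does not imply strict monotonicity.
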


\begin{proof}[Proof of \cref{claim:supervised-optimal-latent-indistinguishable}]
Given a representation $f : \calX\to\bbS^{d-1}$, let $\set{w_c \mid c \in \calC}$ with $\|w_c\|_2 \le 1$ be such that
\[
\Lsup(f) ~=~ \Ex_{c \sim \rho}~\Ex_{x \sim \calD_c}~\ell\inparen{\tuple{f(x)^\top (w_c - w_{c'}}_{c' \ne c}}\,.
\]
For each $c \in \calC$, fix $u_c \in \argmin_{u} \ell\inparen{\tuple{u^\top (w_c - w_{c'}}_{c' \ne c}}$ and consider the representation $\wtilde{f}:\calX\to \bbS^{d-1}$ given as $\wtilde{f}(x) ~:=~ u_{c(x)}$. By the choice of $\{u_c\}_{c \in \calC}$, it follows that
\[
\Lsup(\wtilde{f}) ~\le~ \Ex_{c \sim \rho}~\Ex_{x \sim \calD_c}~\ell\inparen{\tuple{u_{c(x)}^\top (w_c - w_{c'}}_{c' \ne c}} ~\le~ \Lsup(f)\,.
\]
Moreover, for a strictly convex loss $\ell$, there exist a unique choice of $u_c$ for each $c \in \calC$ and thus the inequality above can be tight only if $f$ is an almost latent-indistinguishable representation.
\end{proof}

\begin{proof}[Proof of \cref{claim:supervised-optimal-equiangular}]
We prove the statement for latent-indistinguishable representations $f$. The proof for {\em almost} latent-indistinguishable representations follows similarly.

Let $u_c$ denote the (common) representation for all $x$ in latent class $c$, namely $f(x) = u_{c(x)}$. Let $\set{w_c \mid c \in \calC}$ with $\|w_c\|_2 \le 1$ be such that $\Lsup(f) = \Ex_{(x, c) \sim \Dsup} \ell\inparen{\tuple{f(x)^\top (w_c - w_{c'}}_{c' \ne c}} = \Ex_{c \sim \rho} \ell\inparen{\tuple{u_c^\top (w_c - w_{c'}}_{c' \ne c}}$. We consider two cases, where $(\sum_{c} u_c)^\top (\sum_c w_c) \ge 0$ and $(\sum_{c} u_c)^\top (\sum_c w_c) < 0$.

In the former case, we have $0 \le (\sum_{c \in \calC} u_c)^\top (\sum_{c \in \calC} w_c) \le \sum_c u_c^\top w_c + \sum_{c \ne c'} u_c^\top w_{c'}$ and hence $\Ex_{c, c' \sim \rho} [u_c^\top w_{c'} \mid c \ne c'] \ge -\Ex_{c \sim \rho} u_c^\top w_c/(C-1)$ (under \Cref{ass:uniform} that $\rho$ is uniform over $\calC$).
Thus, there exist $\set{\wtilde{u}_c \mid c \in\calC}$ and $\set{\wtilde{w}_c \mid c \in \calC}$ such that
\[
\wtilde{u}_c^\top \wtilde{w}_{c'} = \begin{cases} \Ex_{c \sim \rho} u_c^\top w_{c} & \text{if } c = c'\\ \Ex_{c, c' \sim \rho}[u_c^\top w_{c'} \mid c \ne c'] & \text{if } c \ne c'\end{cases},
\]
for example, by taking $\set{u_c \mid c \in \calC}$ to be an equiangular set of unit vectors with $u_c^\top u_{c'} = \Ex_{c, c' \sim \rho}[u_c^\top w_{c'} \mid c \ne c'] / \Ex_{c \sim \rho} u_c^\top w_{c}$ for $c \ne c'$ and taking $w_c := u_c \cdot \Ex_{c \sim \rho} u_c^\top w_{c}$.
Let $\wtilde{f}$ be an {\em equiangular} representation given as $\wtilde{f}(x) = \wtilde{u}_{c(x)}$. Moreover, for any permutation $\pi$ over $\calC$, let $\tuple{u_c^\pi}_{c \in \calC}$ and $\tuple{w_c^\pi}_{c \in \calC}$ denote the tuple obtained by permuting the tuples $\tuple{u_c}_{c\in\calC}$ and $\tuple{w_c}_{c\in\calC}$ with permutation $\pi$. For $\rho$ being the uniform distribution over $\calC$, we have
\begin{align*}
	\Lsup(f)
	&~=~ \Ex_{c \sim \rho} \ell\inparen{\tuple{u_c^\top (w_c - w_{c'})}_{c' \ne c}}\\
	&~=~ \Ex_{\pi} \Ex_{c \sim \rho} \ell\inparen{\tuple{(u_c^\pi)^\top (w_c^\pi - w_{c'}^\pi)}_{c' \ne c}}\\
	&~\ge~ \Ex_{c \sim \rho} \ell\inparen{\tuple{\Ex_{\pi} (u_c^\pi)^\top w_c^\pi - \Ex_{\pi} (u_c^\pi)^\top w_{c'}^\pi}_{c' \ne c}}\\
	&~=~ \Ex_{c \sim \rho} \ell\inparen{\tuple{\wtilde{u}_c^\top (\wtilde{w}_c - \wtilde{w}_{c'}}_{c' \ne c}}\\
	&~\ge~ \Lsup(\wtilde{f})
\end{align*}
where the third step follows from Jensen's inequality, using convexity of $\ell$.

Finally, for any non-increasing loss $\ell$, it is easy to see that among all equiangular $\set{u_c \mid c \in \calC}$ and $w_c \propto u_c$, the optimal choice for minimizing $\Ex_{c \sim \rho} \ell\inparen{\tuple{u_c^\top (w_c - w_{c'}}_{c' \ne c}}$ is where $u_c$'s form a Simplex ETF and $w_c = u_c$. Moreover, when $\ell$ is strictly convex our application of Jensen's inequality is strict unless $u_c^\top u_{c'}$ is the same for all $c \ne c'$, in other words, the representation is equiangular.
\end{proof}
}{} 

\section{More Details about the CIFAR-10/100 Experiments}\label{apx:cifar-experiments}
We describe our experimental setup in full detail. For most of our experiments we train with a ResNet-18 \cite{he2016deep} backbone and a 2-layer projection head affixed on top of it with a ReLU in the middle. The training setup closely follows that of \citet{chen2020simple}. We do not use any weight decay as it might bias us away from seeing a simplex ETF structure. We train using the logistic form of NCE loss and use LARS optimizer \cite{you2017large} with a batch size of $512$, learning rate of $0.2$ which is decayed using a cosine decay after $10$ warmup epochs. Departing from the setting of \citet{chen2020simple}, the final output of the projection head is taken as the representation as this is the vector which is used in computing the loss. This is a $128$ dimensional vector which is normalized to lie inside the unit $\ell_2$ ball.

In addition to the results listed in \cref{sec:cifar-experiments}, we present a few additional observations here. In \cref{tbl:cs-5} we present the cosine similarities matrix for a run with 5 classes. In \cref{fig:scaling_k} we show how the average $\CS$ and $\IntraVar_c$ values scale with the number of negative samples for a fixed batch size.

\begin{figure*}[htb]   
\centering
\minipage{0.4\textwidth}
  \includegraphics[width=\linewidth]{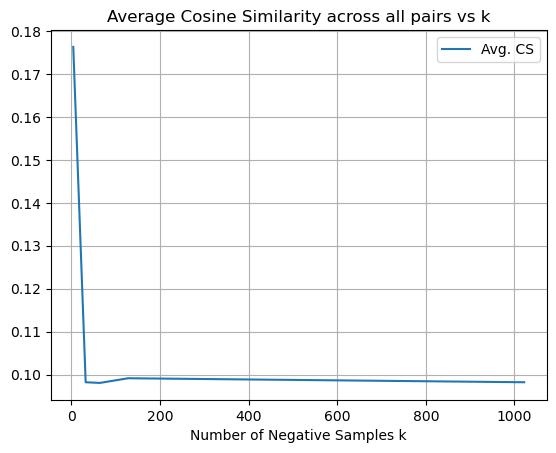}
\endminipage \hspace{5mm}
    \caption{The change in the average cosine similarities is plotted for different values of the number of negative samples $k$. All experiments are done for CIFAR-10 with total number of classes=10.}
    \label{fig:scaling_k}
\end{figure*}

\begin{table}
\begin{center}
\begin{tabular}{ |c|c|c|c|c| } 
 \hline
 1.0 & -0.156 & -0.204 & -0.342 & -0.3413 \\ 
 \hline
 -0.156 & 1.0 & -0.373  & -0.343 & -0.327 \\ 
 \hline
 -0.204 & -0.373  & 1.0 & -0.126 & 0.06 \\ 
 \hline
 -0.342 & -0.343  & -0.126 & 1.0 & -0.173 \\ 
 \hline
 -0.3413 & -0.327 & 0.06 & -0.173 & 1.0 \\ 
 \hline
\end{tabular}
\end{center}
\caption{The cosine similarities between the mean representations of different classes. Shown here for contrastive learning run on a subset of 5 classes from CIFAR-10. Diagonal values are always $1.0$. Our theory expects the off diagonal entries to be $-0.25$}
\label{tbl:cs-5}
\end{table}

\section{Improving Theorem 5 of \citet{ash21investigating}}\label{apx:ash21improvement}
In our work, we showed that in the case of uniform latent classes, the NCE optimal representation doesn't get worse with $k$ in terms of the downstream classification error. However, we could also try to upper bound the $\Lsup(f)$ by some factor $G$ times the $\LNCEk(f)$ for any representation $f$. This is precisely the form of the result obtained by \citet{ash21investigating,saunshi2019theoretical}. In both these works the factor $G$ grows exponentially in $k$. We show that in the case of logistic loss, we can improve this to a factor that is non-increasing and in fact becomes a constant for large enough $k$.

We use the vector $\rho$ to denote the class distribution and $\rho_{\max} = \max \rho_i$ and $\rho_{\min} = \min \rho_i$.
Here we first re-state Theorem 5 of \cite{ash21investigating} using our notation and then proceed to show a stronger result for the setting of $k \ge \rho_{\min}$ when we work with the logistic loss. Recall that in this setting, $1/(1-\rho_{\max})^k$ from their theorem grows exponentially in $k$ suggesting that for large values of $k$, a small $\LNCEk(f)$ may not correspond to a small $\Lsup(f)$. We will show a drastic improvement on this exponential growth with respect to $k$.
\begin{theorem}[Restatement of Theorem 5 of \cite{ash21investigating}]
\label{thm:ash}
    For the logistic loss and any representation $f : \calX \to \bbS^{d-1}$, 
    \begin{align}
        \Lsup(f) \le \frac{2\max\left( 1, \frac{2(1-\rho_{\min})\log C}{k\rho_{\min}}\right)}{(1-\rho_{\max})^k}\left( \LNCEk(f) - \tau_k\Ex_{c,c_i^- \sim \rho^{k+1}} \left[\log\inparen{1+|I|}| I \ne \phi\right]\right)\,, \notag
    \end{align}
    where $I$ is the set of collisions among the $k$ negative samples.
\end{theorem}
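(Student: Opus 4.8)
Since the statement merely restates Theorem~5 of \citet{ash21investigating} in our notation, the plan is to reproduce their argument; its three ingredients are a \emph{mean classifier}, a \emph{Jensen} step passing from the sampled positives/negatives to their class means, and a \emph{collision/coverage} decomposition of the negative-class tuple. I would begin by setting $\mu_c := \Ex_{x \sim \calD_c} f(x)$ for each $c \in \calC$; as an average of unit vectors $\|\mu_c\|_2 \le 1$, so $\tuple{\mu_c}_{c \in \calC}$ is an admissible family of predictors and hence $\Lsup(f) \le \Ex_{c \sim \rho}\Ex_{x \sim \calD_c}\insquare{\logloss\inparen{\tuple{f(x)^\top(\mu_c - \mu_{c'})}_{c' \ne c}}} =: \Lsup^\mu(f)$. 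Next, writing the logistic contrastive loss as $\log\inparen{1 + \sum_{i=1}^k e^{\beta f(x)^\top(f(x_i^-) - f(x^+))}}$ and observing that, for a fixed anchor $x$ and fixed classes $c, c_{1:k}$, it is a log-sum-exp of functions affine in $\inparen{f(x^+), f(x_1^-), \dots, f(x_k^-)}$ and therefore jointly convex, I would apply Jensen's inequality to the inner expectations to obtain
\[
\LNCEk(f) ~\ge~ \Ex_{c, c_{1:k} \sim \rho^{k+1}}~\Ex_{x \sim \calD_c}\insquare{\log\inparen{1 + \textstyle\sum_{i=1}^k e^{\beta f(x)^\top(\mu_{c_i} - \mu_c)}}}.
\]

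The heart of the argument is to split the negative-class tuple by its collision set $I := \set{i : c_i = c}$. For $i \in I$ the term $e^{\beta f(x)^\top(\mu_{c_i}-\mu_c)}$ equals $1$, so when $I \ne \emptyset$ the bracketed sum is at least $|I|$ and the loss is at least $\log(1+|I|)$, while when $I = \emptyset$ I keep the whole sum; taking expectations peels off exactly the subtracted term $\tau_k \Ex\insquare{\log(1+|I|) \mid I \ne \emptyset}$ with $\tau_k := \Pr[I \ne \emptyset]$, leaving the $I=\emptyset$ contribution. Conditioned on $I=\emptyset$ the classes $c_{1:k}$ are i.i.d.\ over $\calC \setminus \set{c}$, and on the coverage sub-event $E$ that they hit every $c' \ne c$ we get $\sum_i e^{\beta f(x)^\top(\mu_{c_i}-\mu_c)} \ge \sum_{c' \ne c} e^{\beta f(x)^\top(\mu_{c'}-\mu_c)}$, so the contrastive loss on $E$ (given $I=\emptyset$) dominates $\logloss\inparen{\tuple{f(x)^\top(\mu_c - \mu_{c'})}_{c' \ne c}}$. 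Pulling $\Pr[I=\emptyset \mid c] = (1-\rho_c)^k$ out as a multiplicative factor (rather than conditioning, so the outer weighting on $c$ stays $\rho$) and invoking $\Lsup(f) \le \Lsup^\mu(f)$ then yields
\[
\Lsup(f) ~\le~ \frac{1}{q}\inparen{\LNCEk(f) - \tau_k \Ex\insquare{\log(1+|I|)\mid I \ne \emptyset}}, \qquad q := \min_{c \in \calC} (1-\rho_c)^k\cdot\Pr[E \mid I=\emptyset, c].
\]

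To finish I would lower bound $q$: clearly $(1-\rho_c)^k \ge (1-\rho_{\max})^k$, and since, conditioned on avoiding $c$, each $c' \ne c$ is drawn each time with probability $\rho_{c'}/(1-\rho_c) \ge \rho_{\min}$, a union bound gives $\Pr[\overline{E} \mid I=\emptyset,c] \le (C-1)(1-\rho_{\min})^k \le \tfrac12$ once $k \ge \tfrac{2(1-\rho_{\min})\log C}{\rho_{\min}}$; the residual small-$k$ regime gets absorbed into the $\max\inparen{1,\, \tfrac{2(1-\rho_{\min})\log C}{k\rho_{\min}}}$ factor, reproducing the claimed bound. The easy part is everything up to and including the coverage reduction; the hard part will be this last step --- matching \emph{exactly} the $(1-\rho_{\max})^{-k}$ denominator, the constant $2$, and the $\max(1,\cdot)$ correction that rescues small $k$, where $E$ is unlikely (indeed impossible when $k < C-1$) so the crude coverage bound must give way to the finer union-bound accounting of \citet{ash21investigating}. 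With the restated bound re-derived this way, our own improvement, \Cref{thm:improved-ash}, then replaces this exponential $(1-\rho_{\max})^{-k}$ blow-up by a factor that is non-increasing in $k$ for the logistic loss.
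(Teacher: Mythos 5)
The paper never proves \Cref{thm:ash}: it is presented explicitly as a restatement of Theorem 5 of \citet{ash21investigating}, cited without proof, as a launching point for the genuinely new \Cref{thm:improved-ash}. So there is no ``paper's own proof'' to compare against; I will assess your reconstruction on its own terms.

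Your skeleton is the right one and matches the known argument: mean classifier $\mu_c$ is feasible in the $\Lsup$ infimum since $\|\mu_c\|_2\le 1$; Jensen applied to the jointly convex log-sum-exp passes from sampled $(x^+,x_i^-)$ to the class means; splitting on the collision set $I$ peels off $\tau_k\,\Ex[\log(1+|I|)\mid I\ne\emptyset]$ because each colliding index contributes $e^{0}=1$ and the $I\ne\emptyset$ branch can then simply be dropped after being lower bounded, leaving the $I=\emptyset$ branch; and $(1-\rho_{\max})^k = \min_c \Pr[I=\emptyset \mid c]$ supplies the denominator. Where the proposal breaks down is exactly where you flag it: the all-or-nothing coverage event $E$ is the wrong object. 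For $k < C-1$ it is literally impossible, so $q=0$ and the bound is vacuous, and even for moderately larger $k$ the union bound does not deliver the $\max\bigl(1,\tfrac{2(1-\rho_{\min})\log C}{k\rho_{\min}}\bigr)$ correction that the statement claims. The actual mechanism in \citet{ash21investigating} is their Lemma~4, a ``partial-coverage'' inequality that, for any fixed $c$ and $x$, lower bounds $\Ex_{c_i^-\sim\rho_{-c}^{k'}}\bigl[\ell(\tuple{f(x)^\top(\mu_c-\mu_{c_i^-})}_{i=1}^{k'})\bigr]$ by $\tfrac{1}{2\lceil 2(1-\rho(c))\log C /(k'\min_{c'\ne c}\rho(c'))\rceil}\,\ell(\tuple{f(x)^\top(\mu_c-\mu_{c'})}_{c'\ne c})$ without ever requiring that every class be hit. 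That lemma is precisely the piece the paper itself imports in the proof of \Cref{thm:improved-ash}, and it is the piece your proposal leaves as ``the hard part.'' Until it (or an equivalent) is supplied, the reconstruction does not establish the stated bound for small or moderate $k$, which is the regime where the statement's $\max(1,\cdot)$ factor is doing all the work.
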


\noindent We prove an improved version of \cref{thm:ash} below. Note that we don't have the $\tau_k\Ex_{c,c_i^- \sim \rho^{k+1}} \left[\log\inparen{1+|I|}| I \ne \phi\right]$ term but the coefficient in front is vastly improved from $\frac{2\max\left( 1, \frac{2(1-\rho_{\min})\log C}{k\rho_{\min}}\right)}{(1-\rho_{\max})^k}$ to $4\max\inparen{1, \frac{2(1-\rho_{\min})\log C}{k(1-\rho_{\max})\rho_{\min}}}$.
\begin{theorem}[Improved Theorem 5 of \cite{ash21investigating}]
\label{thm:improved-ash}
    Let $k \ge 1/\rho_{\max}$. For the logistic loss, for any $f : \calX \to \bbS^{d-1}$,
    \info{PK: I replaced $k'$ in place by $k (1 - \rho_{\max})$. The proof defines $k'$ when necessary.}
    \begin{align*}
        \Lsup(f) ~\le~ 4\max\inparen{1, \frac{2(1-\rho_{\min})\log C}{k(1-\rho_{\max}) \rho_{\min}}}\cdot \LNCEk(f)\,.
    \end{align*}
\end{theorem}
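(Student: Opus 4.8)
The plan is to compare $\Lsup(f)$ to $\LNCEk(f)$ via the \emph{mean classifier}, using convexity of the logistic loss to collapse the ``collision'' contributions to a harmless constant, and a chord bound on $\log(1+\cdot)$ so that the per-class coverage probabilities enter \emph{linearly} rather than through an all-or-nothing event; this is exactly what kills the exponential $(1-\rho_{\max})^{-k}$ factor. Fix $f:\calX\to\bbS^{d-1}$ and set $\mu_c := \Ex_{x\sim\calD_c}[f(x)]$. Since $\|f(x)\|_2=1$ pointwise, $\|\mu_c\|_2\le 1$, so $\tuple{\mu_c}_{c\in\calC}$ is feasible for the downstream task, and I would start from
\[
\Lsup(f) ~\le~ \Lsup(f,\tuple{\mu_c}_{c\in\calC}) ~=~ \Ex_{c\sim\rho}\Ex_{x\sim\calD_c}\insquare{\log\inparen{1+\textstyle\sum_{c'\ne c} a_{c,c'}(x)}}, \qquad a_{c,c'}(x):=e^{-\beta f(x)^\top(\mu_c-\mu_{c'})}.
\]

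\textbf{Lower-bounding $\LNCEk$.} Conditioned on $c,c_1,\dots,c_k\sim\rho$ and $x\sim\calD_c$, the loss $\logloss^\beta(\tuple{f(x)^\top(f(x^+)-f(x_i^-))}_{i=1}^k)$ equals $\log\bigl(1+\sum_i e^{\beta t_i}\bigr)$ with $t_i=f(x)^\top(f(x_i^-)-f(x^+))$, hence is a convex function of the vectors $f(x^+),f(x_1^-),\dots,f(x_k^-)$ (log-sum-exp composed with linear maps). Jensen over the independent draws $x^+\sim\calD_c$, $x_i^-\sim\calD_{c_i}$ then replaces $f(x^+)\!\mapsto\!\mu_c$ and $f(x_i^-)\!\mapsto\!\mu_{c_i}$, so a collision index ($c_i=c$) contributes exactly $e^0=1$ and any other index contributes $a_{c,c_i}(x)\ge 0$. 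Since $\logloss^\beta$ is non-increasing, I would drop the (nonnegative) collision terms and also collapse the multiset $\{c_i:c_i\ne c\}$ down to the set $\bar Q_c:=\{c_1,\dots,c_k\}\setminus\{c\}$, giving
\[
\LNCEk(f) ~\ge~ \Ex_{c,c_{1:k}\sim\rho}\Ex_{x\sim\calD_c}\insquare{\log\inparen{1+\textstyle\sum_{c'\in\bar Q_c} a_{c,c'}(x)}}.
\]

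\textbf{Linearization and coverage.} Now fix $c$ and $x$, write $A:=\sum_{c'\ne c}a_{c,c'}(x)$, and note $s:=\sum_{c'\in\bar Q_c}a_{c,c'}(x)\le A$. Because $t\mapsto\log(1+t)$ is concave and vanishes at $0$, it lies above the chord through $(0,0)$ and $(A,\log(1+A))$, so $\log(1+s)\ge \frac{\log(1+A)}{A}\,s$. Taking $\Ex_{c_{1:k}\sim\rho^k}$ and using $\Pr[c'\in\bar Q_c]=1-(1-\rho_{c'})^k\ge 1-(1-\rho_{\min})^k$ for each $c'\ne c$, I obtain $\Ex_{c_{1:k}}[\log(1+s)]\ge \bigl(1-(1-\rho_{\min})^k\bigr)\log(1+A)$. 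Averaging over $c$ and $x$ and comparing with the mean-classifier bound yields
\[
\LNCEk(f) ~\ge~ \bigl(1-(1-\rho_{\min})^k\bigr)\,\Lsup(f), \qquad\text{i.e.}\qquad \Lsup(f)~\le~\frac{1}{1-(1-\rho_{\min})^k}\,\LNCEk(f).
\]

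\textbf{Elementary cleanup, and the one real obstacle.} Finally I would check $\frac{1}{1-(1-\rho_{\min})^k}\le 4\max\inparen{1,\frac{2(1-\rho_{\min})\log C}{k(1-\rho_{\max})\rho_{\min}}}$: from $(1-\rho_{\min})^k\le e^{-k\rho_{\min}}$ and $1-e^{-t}\ge t/(1+t)$ one gets $\frac{1}{1-(1-\rho_{\min})^k}\le 1+\tfrac1{k\rho_{\min}}$, which is $\le 4$ when $k\rho_{\min}\ge\tfrac13$ and otherwise is $<\tfrac{2}{k\rho_{\min}}\le \tfrac{8(1-\rho_{\min})\log C}{k(1-\rho_{\max})\rho_{\min}}$ since $\tfrac{1-\rho_{\max}}{4(1-\rho_{\min})}\le\tfrac14\le\log C$; so the stated bound follows (the hypothesis $k\ge1/\rho_{\max}$ is not actually needed along this route, which in fact also yields the cleaner $\Lsup(f)\le\frac{1}{1-(1-\rho_{\min})^k}\LNCEk(f)$). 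The single non-mechanical step is the linearization: conditioning on the event ``all $C$ classes appear among the negatives'', as in the argument behind \cref{thm:ash}, costs a factor $(1-\rho_{\max})^{-k}$, whereas replacing $\log(1+\cdot)$ by its chord lets each class contribute its coverage probability $1-(1-\rho_{c'})^k$ additively, which is precisely what removes the exponential dependence on $k$; the $\log C$ in the final statement is then only an artifact of the last inequality.
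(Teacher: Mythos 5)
Your proof is correct, and it takes a genuinely different route from the paper's. The paper follows the skeleton of \citet{ash21investigating}: after the same mean-classifier reduction and Jensen step, it conditions on the event that the number of collisions is at most $k\rho_{\max}$ (which holds with probability at least $1/2$ by a median-of-binomial argument), drops the collision coordinates, and then invokes Lemma~4 of \citet{ash21investigating} on the remaining $k'=k(1-\rho_{\max})$ non-collision negatives. You instead drop collisions \emph{and} duplicates, and then apply the chord bound $\log(1+s)\ge \frac{\log(1+A)}{A}s$ for $0\le s\le A$, so that each class's coverage probability $1-(1-\rho_{c'})^k$ enters the bound \emph{linearly} rather than through a single all-or-nothing event. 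This replaces the whole conditioning-plus-Lemma-4 machinery with one elementary inequality, yields the cleaner and strictly stronger intermediate bound $\Lsup(f)\le \frac{1}{1-(1-\rho_{\min})^k}\LNCEk(f)$, makes the hypothesis $k\ge 1/\rho_{\max}$ unnecessary, and is self-contained (no appeal to \citet{ash21investigating}'s Lemma~4). Your closing calculation that $\frac{1}{1-(1-\rho_{\min})^k}\le 4\max\bigl(1,\tfrac{2(1-\rho_{\min})\log C}{k(1-\rho_{\max})\rho_{\min}}\bigr)$ via $1-e^{-t}\ge t/(1+t)$ and the case split on $k\rho_{\min}\gtrless 1/3$ is also correct (using $\log C\ge\log 2>1/4$ for $C\ge2$), and you are right that the $\log C$ in the final statement is an artifact of matching the paper's form rather than something your argument actually needs.
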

\begin{proof}
We recall the sub-addivitity property of logistic loss.
\begin{lemma}[Sub-additivity of Logistic Loss (Lemma 1, \cite{ash21investigating})]
Let $v \in \mathbb{R}^k$ be a vector. For all $I_1, I_2 \subset [k]$, and $S = I_1 \cup I_2$, we have that 
\begin{align}
    \ell(\{v_i\}_{i \in I_1}) ~\le~ \ell(\{v_i\}_{i \in S}) ~\le~ \ell(\{v_i\}_{i \in I_1}) + \ell(\{v_i\}_{i \in I_2})\,. \notag
\end{align}
\end{lemma}
\noindent Following \cite{ash21investigating}, we begin with an application of Jensen's inequality to get
\begin{align}
    \LNCEk(f) &~=~ \Ex_{\DNCE} \insquare{\ell\inparen{\tuple{f(x)^\top (f(x^+) - f(x_i^-))}_{i=1}^k}}\, \notag\\
    &~\ge~ \Ex_{c,c_i^-\sim \rho^{k+1}, x\sim D_c} \insquare{\ell\inparen{\tuple{f(x)^\top (\mu_c - \mu_{c_i^-})}_{i=1}^k}}\,,
\end{align}
where $\mu_c = \Ex_{x \sim D_c}[f(x)]$.
Given $k$ negative samples, if the last $k_1$ of them are collisions, we have from the sub-additivity of the logistic loss,
\begin{align}
    \ell\inparen{\tuple{f(x)^{\top}\inparen{\mu_c - \mu_{c_i}^-} }_{i=1}^k} &~=~  \ell\inparen{\tuple{f(x)^{\top}\inparen{\mu_c - \mu_{c_i}^-}}_{i=1}^{k-k_1} \circ \tuple{0}_{i=k-k_1+1}^k} ~\ge~ \ell\inparen{\tuple{f(x)^{\top}\inparen{\mu_c - \mu_{c_i}^-} }_{i=1}^{k-k_1}}.
\end{align}
Now for any fixed $c$, the probability of a collision for a randomly drawn negative sample is $\rho_c$. Given $k$ negative samples, let $I_c$ denote the set of collisions among the negative samples. For simplicity, we will assume that $k\rho_{\max}$ is an integer. For the most likely class, $|I_{\max}|$ is distributed as $\mathrm{Bin}(k,\rho_{\max})$ and its median is precisely $k\rho_{\max}$. Therefore, we have that
\begin{align}
    \Pr\left[|I_{\max}| > k\rho_{\max} \right] \le 1/2\,.
\end{align}
Let $k' = k(1-\rho_{\max})$.
Now,
\begin{align}
    \LNCEk(f) &~\ge~ \Ex_{c,c_i^-, x\sim D_c} \insquare{\ell\inparen{\tuple{f(x)^\top (\mu_c - \mu_{c_i^-})}_{i=1}^k}} \notag\\
    &~=~ \Ex_{c\sim \rho} \insquare{\Pr\insquare{|I_c| \le k\rho_{\max}} \cdot \Ex_{c_i^-\sim \rho^{k}, x\sim D_c}\insquare{\ell\inparen{\tuple{f(x)^\top (\mu_c - \mu_{c_i^-})}_{i=1}^{k}} \,\Big\vert\, |I_c| \le k\rho_{\max}}} \notag\\
    &~~~~~+ \Ex_{c\sim \rho} \insquare{\Pr\insquare{|I_c| > k\rho_{\max}} \cdot \Ex_{c_i^-\sim \rho^{k}, x\sim D_c}\insquare{\ell\inparen{\tuple{f(x)^\top (\mu_c - \mu_{c_i^-})}_{i=1}^{k}} \,\Big\vert\, |I_c| > k\rho_{\max}}} \notag\\
    &~\ge~ \Ex_{c\sim \rho} \insquare{\Pr\insquare{|I_c| \le k\rho_{\max}} \cdot \Ex_{c_i^-\sim \rho_{-c}^{k'}, x\sim D_c}\insquare{\ell\inparen{\tuple{f(x)^\top (\mu_c - \mu_{c_i^-})}_{i=1}^{k'}}}} \\
    &~\ge~ \Pr\insquare{|I_{\max}| \le k\rho_{\max}} \cdot \Ex_{c\sim \rho,c_i^-\sim \rho_{-c}^{k'}, x\sim D_c} \insquare{\ell\inparen{\tuple{f(x)^\top (\mu_c - \mu_{c_i^-})}_{i=1}^{k'}}}\,.
\end{align}
Next, we use Lemma 4 from \cite{ash21investigating} which gives that for any $c \in \calC$ and any $x$
\begin{align}
    \Ex_{c_i^-\sim \rho_{-c}^{k'}} \insquare{\ell\inparen{\tuple{f(x)^\top (\mu_c - \mu_{c_i^-})}_{i=1}^{k'}}} ~\ge~ \frac{1}{2\left\lceil \frac{2(1-\rho(c))\log C}{(k')\min_{c' \ne c} \rho(c')}\right\rceil}\cdot \ell\inparen{\tuple{f(x)^\top (\mu_c - \mu_{c'})}_{c' \in \mathcal{C}\backslash \{c\}}}.
\end{align}
Substituting these above, we get
\begin{align}
    \LNCEk(f) &\ge \frac{1}{2}\Ex_{c\sim D,c_i^-\sim D_{-c}^{k'}, x\sim D_c} \insquare{\ell\inparen{\tuple{f(x)^\top (\mu_c - \mu_{c_i^-})}_{i=1}^{k'}}} \\
    &\ge \frac{1}{2}\Ex_{c\sim D} \frac{1}{2\left\lceil \frac{2(1-\rho_{\min})\log C}{(k')\rho_{\min}}\right\rceil}\cdot \ell\inparen{\tuple{f(x)^\top (\mu_c - \mu_{c'})}_{c' \in \mathcal{C}\backslash \{c\}}} \\
    &\ge \frac{1}{4\left\lceil \frac{2(1-\rho_{\min})\log C}{(k')\rho_{\min}}\right\rceil}\cdot\Lsup(f, \tuple{\mu_c}_{c \in \calC}) \\
    &\ge \frac{1}{4\left\lceil \frac{2(1-\rho_{\min})\log C}{(k')\rho_{\min}}\right\rceil}\cdot\Lsup(f)
\end{align}
which gives us the claimed result.
\end{proof}


\end{document}